\documentclass{article}

\PassOptionsToPackage{numbers, compress}{natbib}
 \usepackage[preprint]{neurips_2026}

\usepackage[utf8]{inputenc} 
\usepackage[T1]{fontenc}    
\usepackage{hyperref}       
\usepackage{url}            
\usepackage{booktabs}       
\usepackage{amsfonts}       
\usepackage{nicefrac}       
\usepackage{microtype}      
\usepackage{xcolor}         

\usepackage{amsmath}
\usepackage{amssymb}
\usepackage{mathtools}
\usepackage{amsthm}
\theoremstyle{plain}
\newtheorem{theorem}{Theorem}[section]
\newtheorem{lemma}[theorem]{Lemma}
\newtheorem{definition}[theorem]{Definition}

\newtheorem{conjecture}[theorem]{Conjecture}

\usepackage{algorithm}
\usepackage{algpseudocode}

\newcommand{\w}{\vec{w}^{(1)}_{\alpha}}
\newcommand{\ww}{\vec{w}^{(2)}_{\alpha}}

\usepackage{subcaption}
\usepackage{multirow}

\title{It's Not a Lottery, It's a Race: \\ Understanding How Gradient Descent Adapts the Network's Capacity to the Task}

\author{Hannah Pinson \\
  Data and AI cluster, EAISI \\
  Eindhoven University of Technology\\
  \texttt{h.pinson@tue.nl} \\
}

\begin{document}

\maketitle
\begin{abstract}
  Our theoretical understanding of neural networks is lagging behind their empirical success. One of the important unexplained phenomena is why and how, during the process of training with gradient descent, the theoretical capacity of neural networks is reduced to an effective capacity that fits the task. We here investigate the mechanism by which gradient descent achieves this through analyzing the learning dynamics at the level of individual neurons in single hidden layer ReLU networks. We identify three dynamical principles, namely mutual alignment, unlocking and racing, that together explain why we can often successfully reduce capacity after training through the merging of equivalent neurons or the pruning of low norm weights. We specifically explain the mechanism behind the lottery ticket conjecture, or why the specific, beneficial initial conditions of some neurons lead them to obtain higher weight norms. 
\end{abstract}

\begin{figure}[h!]
    \centering
\includegraphics[width=0.80\linewidth]{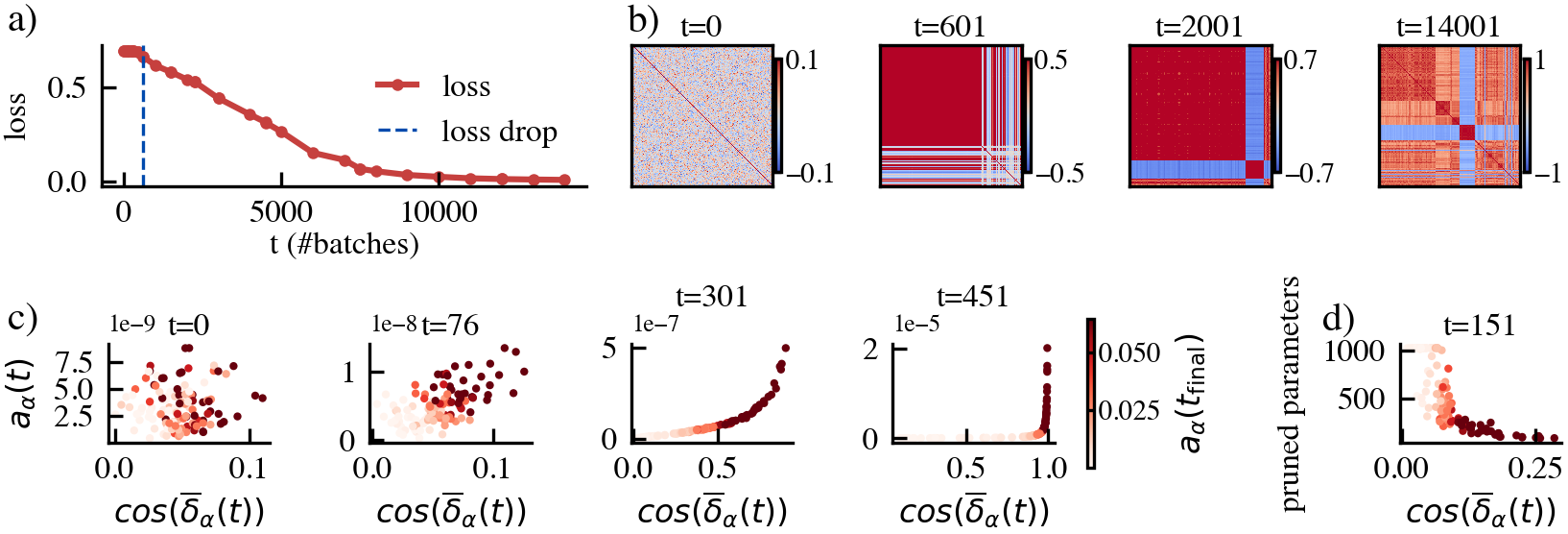}
    \caption{Overview of results for a network of d=300 neurons on a CIFAR-10 based dataset of N=200 samples, trained from small initial conditions. a) Training loss. The loss drops around t=300. b) Cosine similarity between the total weight vectors of each neuron, organized by detected clusters (cos\_sim > 0.99). \emph{Groups of neurons thus mutually align over time. These groups can be merged.} c) Norm product $a_{\alpha}(t)$ of each neuron $\alpha$ of the largest cluster, in function of the alignent $\overline{\delta_{\alpha}}(t)$ at different timepoints very early in training. Colors correspond to final norm. \emph{Thus the closer the alignment in the early phase of learning, the higher the obtained norm at the end of training (within a group of neurons)}. d) Number of individual weights pruned per neuron in function of $\overline{\delta_{\alpha}}(t)$ computed at a very early point in training, for the largest group of neurons. Individual weights are pruned using IMP, the algorithm to find lottery tickets \cite{frankle_lottery_2019}, with a target sparsity of 50\% over 3 rounds. Colors correspond to final norm. \emph{The alignment early in training can thus be used to predict the neurons for which a high number of parameters would be pruned to obtain lottery tickets.} We here derive a theoretical framework to explain these dynamics and the nature of alignment $cos(\overline{\delta_{\alpha}}(t))$. }
    \label{fig:main_figure}
\end{figure}

\section{Introduction}
Modern neural networks are very large or even overparameterized: they have more parameters than input samples. After training, they can often be pruned or compressed \cite{zhu_prune_2017, FrantarA23, hoeer_sparsity_2021, 10643325}, indicating that the final function they implement can actually be represented using (far) fewer parameters.  This excess of parameters leads to higher computational costs. Still, it is found that overparameterized networks generalize better, a phenomenon explained by the implicit bias of gradient descent  \cite{ neyshabur_search_2015, du_power_2018, nacson_convergence_2019, ji_directional_2020, advani_high-dimensional_2020}. A closely related concept is that of lottery tickets (for a survey, see \cite{liu_survey_2024}). In the original paper on lottery tickets \cite{frankle_lottery_2019}, the authors show experimentally that, within a randomly initialized overparameterized network, there exists a (much) smaller subnetwork that can reach at least a similar performance as the full network. They showed such networks can be discovered based on the pruning of weights with lower norms. The exact initialization values are key: the discovered subnetwork needs to be trained from its original initial conditions. The name for these types of subnetworks, ‘lottery tickets', reflects their lucky initialization. From this point of view, training overparameterized networks with gradient descent can be seen as a mechanism to select optimally initialized subnetworks through the amplification of their parameter norms \cite{morcos_one_2019}. The existence of the subnetworks at initialization is the so-called \emph{lottery ticket hypothesis}. The idea that gradient descent selects these subnetworks is denoted \emph{lottery ticket conjecture} \cite{frankle_lottery_2019}. \\ 
It has been shown that lottery tickets can be found in a wide range of architectures \cite{liu_survey_2024}, including transformers and LLMs \cite{brix_successfully_2020, behnke_losing_2020}, and there is a proof of the lottery ticket hypothesis itself \cite{malach2020proving}. In terms of the conjecture, i.e., how gradient descent amplifies the norms of the lottery tickets, there are some limited results: lottery tickets can be discovered through pruning already in the early stages of training \cite{frankle_early_2020}, the signs of the initializations matter more than the values \cite{zhou2019deconstructing}, and the same lottery tickets can be used for similar datasets \cite{morcos_one_2019}, indicating a strong dependence of a lottery ticket on the (general) features of the dataset it was trained on. In 
\cite{paul2022unmasking, frankle_linear_2020}, the authors link the success of (iterative) pruning used to discover lottery tickets to the robustness of gradient descent and linear mode connectivity. However, both the exact nature of the
beneficial initial conditions, as well as how exactly gradient descent leads to lottery ticket obtaining a larger norm compared to
other parts of the network, remains an open question.\textbf{Our contributions are the following}:
\begin{itemize}
\itemsep-0em 
    \item we develop a novel framework to analyze the early learning dynamics, i.e., the evolution of the parameters under the influence of gradient descent, of individual neurons in single hidden layer \emph{ReLU} networks for binary classification tasks, and this in the feature learning regime \cite{woodworth_kernel_2020}, and we identify the three important dynamical principles of mutual alignment, unlocking and racing, 
    \item we explain how these dynamical principles in our specific setting can give rise to a reduction in effective capacity in terms of equivalent neurons that can be merged,
    \item we reveal how these principles lead to corresponding neurons obtaining a higher norm for their associated parameters, and explain the nature of the beneficial initial conditions of lottery tickets.
\end{itemize}
The notion of (mutual) alignment we here discuss is not novel: it has been shown before that, under the influence of gradient descent, weight vectors of networks trained in the rich regime tend to align to a (smaller) set of target directions related to the dataset \cite{soudry_implicit_2018, gur-ari_gradient_2018, atanasov_neural_2022}.  These directions can be exactly or approximately calculated for simplified settings, such as linear networks or single neurons \cite{atanasov_neural_2022, saxe_neural_2022, refinetti_neural_2023}.  Alignment is directly related to a  reduction in effective capacity \citep{Baratin21, kopitkov_neural_2020, paccolat_geometric_2021}, as it reduces the dimension of the feature space implemented by the weights.  In \cite{atanasov_neural_2022}, the authors introduce the notion of ‘silent aligment': the fact that this alignment often takes place early in training, \emph{before} the loss considerably drops, yielding a clear separation in time between changes in weight vector directions and changes in their norm (and subsequently in the loss). The crucial and new observation we here make is that there is no complete decoupling between changes in direction and changes in norm: we find, in fact, that the growth in norm exponentially depends on the angular distance to the target direction. This creates a ‘winner-takes-all' dynamic \cite{fukai1997simple}: neurons which are initialized closer to their target directions can grow more quickly in norm. Optimally aligned neurons that increase in norm, reduce the loss and solve the task. Thus the neurons that arrive earlier strongly reduce the loss and gradients, and therefore they inhibit the development of other neurons. In general, a few neurons which have the ‘luck' to be initialized close to their target direction, thus obtain a higher norm and contribute more to the solution of the task; at the same time, other, less important neurons do not grow significantly in norm. The latter neurons can subsequently be pruned. In short, we thus find that gradient descent implicitly selects lottery tickets in a process more akin to dynamical race than a static lottery. \\
\textbf{Related Work} Our analysis is closely related to the study of gradient descent dynamics for linear networks \cite{saxe_mathematical_2019, tarmoun_understanding_2021, j_domine_exact_2023, pinson_linear_2023, gidel_implicit_2019} and extensions to nonlinear networks in simplified settings \cite{tachet_learning_2020, refinetti_neural_2023, jarvis_make_2025}.  We here use a ReLU network including a softmax function and a cross-entropy loss, which makes our setting closer to networks used in practice, and these works assume completely silent alignment, and/or do not focus on the dynamics of individual neurons within a network. 
Another line of related work relates iterative pruning based on magnitudes to the discovery of inductive biases \cite{pellegrini2022neural, redman_how_2025}. Lastly, some works study the prediction of lottery tickets at initialization or early in training. In \cite{frankle_linear_2020, frankle_early_2020}, the authors show experimentally that lottery tickets are largely determined early in training, and multiple experimental methods exist to perform pruning at those early stages \cite{lee_snip_2019, wang_picking_2020}. Our work provides the theoretical basis to understand the success of those methods.

\section{Preliminaries and Notation}

\textbf{Single Hidden Layer ReLU Networks} Let $\{(\vec{x}^s, \vec{y}^s)\}_{s=1}^N \subseteq \mathbb{R}^N \times \{ \left[ \begin{smallmatrix} 1 \\ 0 \end{smallmatrix} \right], \left[ \begin{smallmatrix} 0 \\ 1 \end{smallmatrix} \right] \} $ be a binary classification dataset. We consider the single hidden layer neural network with $d$ neurons in the hidden layer:
$\vec{h}^{s} =  ReLU( W^{(1)}  \vec{x}^s + \vec{b}^{(1)} )  \; ; \vec{z}^{s} =  W^{(2)}  \vec{h}^{s}    + \vec{b}^{(2)} \; ;  \hat{\vec{y}}^s = softmax(  \vec{z}^{s}  )
$,
trained with gradient descent for a classification task using a binary cross-entropy loss, in our setup with two output classes: $L^s = - \sum_{i=0}^{1} \vec{y}_i^s \log(\hat{\vec{y}}_i^s) $ for a single sample s. 
We denote the vector of incoming weights to a neuron indexed by $\alpha$ as $\w =  W^{(1)}_{\alpha,:}$ and of its outgoing weight vector as $\ww = \mathbf{W}^{(2)}_{:, \alpha}$. The $d_{input}-1$ angles of $\w$ in a (hyper)spherical coordinate system are denoted $\theta_{\w, i}$ with $i \in {0,...,d_{input}-1}$ and $d_{input}$ the dimension of the input. The angle of $\ww$ is denoted $\theta_{\ww}$. We furthermore define $\vec{\Delta y^s} = \vec{y^s} - \vec{\hat{y}}^s$, with corresponding angle $ \theta_{\vec{\Delta y^s}} $. \\
\textbf{ReLUs as gates and effective datasets} ReLU activation functions effectively act as gates: positive values are passed without modification, and negative values result in zero. At any point during training, we can determine what we call the ‘gating vector' $\vec{g}_{\alpha}$ of a neuron $\alpha$:  $\vec{g}_{\alpha}$ is a vector of dimension $d_{input} \times 1$ with values either 0 or 1, such that $g_{\alpha}^s$ represents whether the incoming sample is passed on ($g_{\alpha}^s=1$) or blocked ($g_{\alpha}^s=0$) by the ReLU activation of neuron $\alpha$. From this point of view, each neurons only ‘sees' an effective dataset given by the samples that pass its gates, as the other samples do not contribute to its activations or gradients. The total task/dataset can thus be seen as an overlapping collection of effective tasks/datasets, and if we only consider its effective samples, each neuron acts as a linear function. In general $\vec{g}_{\alpha}$ changes over training time, i.e., $\vec{g}_{\alpha} = \vec{g}_{\alpha}(t)$. 
In terms of notation, we will use $\langle \rangle$ to denote an average over all samples that pass the ReLU gate of the given neuron at the given point in training, i.e., the average over the effective dataset of a given parameter $p_\alpha$: $
    \langle p_{\alpha} \rangle = \frac{1}{N}\sum_s^{N} g^s_\alpha p_{\alpha}^s = \frac{1}{N}\sum_s^{N^{e}} p_{\alpha}^s
$
, with $N^{e}$ the number of current effective samples.  We will use the notation $\langle p_{\alpha} \rangle_{C_1}$ to denote an average over all current \emph{effective} samples of class 1, and similar for class 2. $N^{e}_1$ and $N^{e}_2$ denote the number of effective samples of class 1 and 2. \\
\textbf{Gradient flow} When the learning rate is low, the parameters of the network change only slightly with each new sample provided to the network. Under these conditions, we can approximate the change in a parameter over a set of samples through using the average of the gradients obtained over this set of samples $ \Delta p  \approx - \lambda N \langle \frac{\partial L^s}{ \partial p} \rangle$, with $\lambda$ the learning rate and N the number of samples/updates. We can then consider the continuous time limit, and obtain for each parameter the differential equation $\frac{\partial p}{\partial t}  = - \lambda  \langle \frac{\partial L^s}{ \partial p} \rangle
    \label{eq:gradient_flow}$. (see also \cite{saxe_mathematical_2019}.)

\section{Mathematical Theory}

The learning dynamics in general follow the complicated, coupled system of nonlinear equations given by the equations of gradient descent. However, we will show that we can consider different phases in training during which the dynamics can be drastically simplified, and from the simplified equations during these phases we can then explain how the phenomena of alignment, unlocking and racing arise. The assumptions we will keep throughout the theoretical sections are: the architecture of the network and loss (see above), the initial conditions (small random initial values drawn from a given gaussian distribution, zero biases), and a small learning rate such that we can consider gradient flow. 

\subsection{Equations of Gradient Descent}
We first define the vector $\langle \vec{\gamma}_{\alpha}^s \rangle$, which, as we will show, plays a central role in the dynamics:
{\small
\begin{align}
   & \langle \vec{\gamma}_{\alpha}^s \rangle = \frac{1}{N} \sum^N_s \vec{g}^{s}_\alpha  \cos(\theta_{\vec{\Delta y}^{s}} - \theta_{\ww}) \, ||  \Delta \vec{y}^s  || \,  \vec{x}^{s} 
   \label{eq:gamma}
\end{align}}
which depends on time through $\vec{g}^{s}_\alpha = \vec{g}_{\alpha}^s (t)$ and $ \Delta \vec{y}^s =  \Delta \vec{y}^s(t)$. We can then write out the gradients in full \emph{for a given gating vector $\vec{g_\alpha}$} in a form that will allow us to more easily derive the dynamical principles we are interested in (details appendix \ref{app:eq_grad_descent}).  The notation $\langle \rangle$ thus denotes an average over the corresponding effective dataset:
{\small
\begin{align}
     & \!\langle \frac{\partial L^s}{ \partial \theta_{\ww}} \rangle =- ||\vec{w}^{(2)} _{\alpha}|| \; \langle h_\alpha^s \;  ||\vec{\Delta y}^{s}|| \;  
      \sin{(\theta_{\vec{\Delta y^{s}}}- \theta_{\ww}}) \rangle  \label{eq:grad_phi_alpha} \\ 
     & \! \langle \frac{\partial L^s}{ \partial ||\vec{w}^{(2)}_{\alpha}||} \rangle =  - \langle h_\alpha^s \;  ||\vec{\Delta y}^{s}|| \;  \cos{(\theta_{\vec{\Delta y^{s}}}- \theta_{\ww}}) \rangle \label{eq:grad_norm_ww_alpha} \\
      & \! \langle \frac{\partial L^s}{ \partial \theta_{\w,i}} \rangle \! = \! - \!||\vec{w}^{(2)}_{\alpha}|| \; ||\vec{w}^{(1)}_{\alpha} || \; ||\langle\vec{\gamma}_{\alpha}^s \rangle  || \; \sin{(\theta_{\langle\vec{\gamma}_{\alpha}^s \rangle,i } - \theta_{\w,i})} \label{eq:grad_theta_alpha} \\
      & \! \langle \frac{\partial L^s}{ \partial ||\vec{w}^{(1)}_{\alpha}||} \rangle  = - ||\vec{w}^{(2)}_{\alpha}|| \; ||\langle\vec{\gamma}_{\alpha}^s \rangle  || \;\cos{(\theta_{\langle\vec{\gamma}_{\alpha}^s \rangle,i } - \theta_{\w,i})}\\
       & \langle\frac{\partial L^s}{\partial b_\alpha}\rangle = - || \vec{w}^{(2)}_{\alpha} || \; \langle ||\vec{\Delta y^s}|| \;  \cos{(\theta_{\vec{\Delta y}^s} - \theta_{\ww} )} \rangle
      \label{eq:grad_norm_w_alpha}
    \end{align} }

\subsection{Phase 1: Mutual Alignment}

We here show that in the very early phase of training and when starting from small initial conditions, the weight vectors of the neurons form subgroups that each converge in direction to a set of dataset-dependent target directions. These directions correspond to naive classifiers of specific effective datasets. We do this by first defining a set of assumptions that simplify the dynamics, and we then show how the neurons behave under these simplified settings. Finally, we argue why the assumptions are valid in the early phase of learning when starting from small initial conditions.




\begin{figure}[h!]
    \centering
   \includegraphics[width=\linewidth]{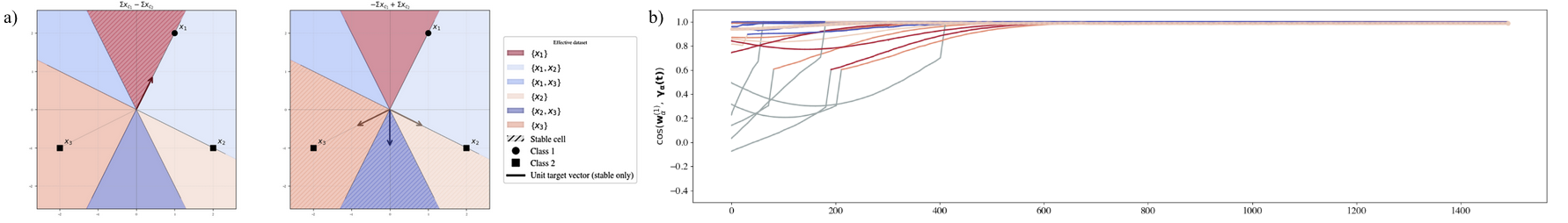}
    \caption{  Illustration with a toy dataset of 3 samples. (a) resulting cells with indications of the corresponding effective dataset, stable cells, and target vectors (only shown for the stable cells). (b) the randomly initialized $\w$ follow a trajectory where they each time converge in direction to a target direction that can lie outside (unstable cells, gray part of the lines) or inside (stable cells, colors according to panel (a)) their current cell. After some boundary crossings, they eventually reach their stable cell.}
    \label{fig:placeholder}
\end{figure}


To handle the ReLU activation functions we first consider the following lemma:

\begin{lemma} \textbf{Number of cells}
    Consider $N$ data samples $\vec{x}_i \in R^{d_{input}}$. When all biases $b_\alpha =0$, the space can be divided in K conic regions $\mathcal{C}_k$ of constant effective dataset and gating vector:  for any two weight vectors $\w ,\vec{w}^{(1)}_{\beta} \in \mathcal{C}_k$ we have $\vec{g}_{\alpha} = \vec{g}_{\beta} $. The number of such regions $ K  $ is upper bounded by $ 2 \sum_{i=0}^{d_{input}-1} \binom{N-1}{i} $. We call these regions cells.  \\\textbf{Proof:} When all $b_\alpha =0$, the effective dataset for a neuron $\alpha$ consists of the samples $x_i$ for which $\w \cdot x_i > 0$ (as these samples will pass the ReLU function). Regions of constant effective datasets are bounded by the hyperplanes through the origin orthogonal to each sample  vector $\vec{x}_i$, given by $\w \cdot \vec{x}_i = 0$. As we have N such planes, the number of regions  K is upper bounded by $ 2 \sum_{i=0}^{d_{input}-1} \binom{N-1}{i} $ (see \cite{cover2006geometrical}). 
\end{lemma} 

 As the effective datasets are by definition constant within each cell $\mathcal{C}_k$ , each cell $\mathcal{C}_k$ has an associated fixed gating vector we denote $\vec{g}_k$. A weight vector $\w$ is randomly initialized in one of the cells $\mathcal{C}_k$. As the weights are updated, $\w$ changes direction. While $\w$ remains in the same cell $\mathcal{C}_k$ , its gradient updates (eqs. \ref{eq:grad_phi_alpha} - \ref{eq:grad_norm_w_alpha}) are based on the gating vector $g_{k}$. If  $\w$ crosses the boundary from one cell to the next, its gating vector $\vec{g}_\alpha(t)$ changes and its gradient updates are now based on the new gating vector $g_{k'}$ associated to the new cell $\mathcal{C}_{k'}$.

\begin{definition}\textbf{Simplified dynamics phase 1}  For a neuron $\alpha$, we make the following assumptions whenever we assume 'simplified dynamics' in this phase: $b_\alpha =0$, $||\w||$ constant and small, $||\ww||$ constant and small, and $\theta_{\ww} = \frac{7\pi}{4} + 2\pi$ ("aligned to predict class 1") or  $\theta_{\ww} = \frac{3\pi}{4} + 2\pi$ ("aligned to predict class 2").
\end{definition}

\begin{definition}\textbf{Target vectors associated to a cell}
   We associate to each cell $\mathcal{C}_k$ and corresponding gating vector $\vec{g}_k$ two vectors: $\vec{\gamma}^{*,simpl.}_{1,k} = \frac{\sqrt{2}}{2} (
      N^{e}_1 \langle  \, \vec{x}^s  \rangle_{C_1} - N^{e}_2 \langle \, \vec{x}^s  \rangle_{C_2} )$ and $\vec{\gamma}^{*,simpl.}_{2,k} = \frac{\sqrt{2}}{2} (N^{e}_2 \langle  \, \vec{x}^s  \rangle_{C_2} - N^{e}_1 \langle \, \vec{x}^s  \rangle_{C_1})$ \textbf{Note}: for the next order approximation, see appendix \ref{app:apporx_target_directions}.
\end{definition}
\vspace{-5pt}
These vectors $\vec{\gamma}^{*,simpl.}_{c,k}, c \in \{1,2\} $ correspond to a naive classifier \cite{refinetti_neural_2023}: the direction that separates the samples in the given effective dataset based solely on the mean of the classes, but with a different sign. We now connect the direction of the weights $\w$ in a cell $\mathcal{C}_k$ to the target directions $ \vec{\gamma}^{*,simpl.}_{c, k}, c \in \{1,2\}$ associated to the cell. To ease notation, we define $c(\alpha)$ as  $c(\alpha) = 1$ when $\theta_{\ww} = \frac{7\pi}{4} + 2\pi$ and $c(\alpha) = 2$ when $\theta_{\ww} = \frac{3\pi}{4} + 2\pi$.
\vspace{-5pt}
\begin{lemma} Consider $\w \in \mathcal{C}_k$ and simplified dynamics. As long as $\w \in \mathcal{C}_k$, $ \langle\vec{\gamma}_{\alpha}^s \rangle (t)$  is  a constant vector given by $\vec{\gamma}^{*,init}_{c(\alpha),k}$ and $\theta_{\w,i}(t) \rightarrow \theta_{\vec{\gamma}^{*,init}_{c(\alpha),k},i } $.  \textbf{Proof outline}: In the case of simplified dynamics,  eq. \ref{eq:grad_theta_alpha} reduces to:
$\langle \frac{\partial L^s}{ \partial (\theta_{\w,i})} \rangle \! =  - C_{\alpha]} || \vec{\gamma}^{*,simpl.}_{c(\alpha),k} || \; \sin{(\theta_{\vec{\gamma}^{*,simpl.}_{\alpha),k},i} - \theta_{\w,i})}  
$ with $C_\alpha$ a constant. Note that this equation is in this case decoupled from the other equations. We then compute the minima and second derivatives to determine the stability. Detailed proof, see appendix \ref{app:proof_lemma_conv}. 
\end{lemma}
\vspace{-5pt}
While within the same cell $C_k$ and under simplified dynamics, the direction of $\w$ will thus be updated towards the constant target direction $\vec{\gamma}^{*,simpl.}_{c(\alpha),k}$. This target direction can point inward or outwards the current cell, meaning the weight vector will either remain in the current cell (at least as long as the simplified dynamics hold) or will cross to a different cell. We can thus consider stable and unstable cells, and arrive at our final theorem: 
\begin{definition}
A cell $\mathcal{C}_k$ is stable for a neuron $\alpha$ if $\vec{\gamma}^{*,simpl.}_{c(\alpha), k} \in \mathcal{C}_k$ and unstable if $\vec{\gamma}^{*,simpl.}_{c(\alpha),k} \notin C_k$. 
\end{definition} 

\begin{theorem} \textbf{Groups of neurons converge to the same direction}
    Under simplified dynamics, the randomly initialized $\w$ will start out in a stable cell or they will follow a trajectory from (unstable) cell to (unstable) cell, until they enter a stable cell. When $d > K_{stable}$, where d is the number of neurons in the hidden later, subgroups of neurons will eventually enter the same stable cell. They remain in this cell and converge in direction to the target angle:  $\forall \w \in \mathcal{C}_k, i: \theta_{\w,i}(t) \rightarrow \theta_{\vec{\gamma}^{*,simpl.}_{c(\alpha),k},i}$. The target angle $\theta_{\vec{\gamma}^{*,simpl.}_{c(\alpha),k},i}$ is the same for all neurons $\alpha$ that have the same alignment of their outgoing weights (c($\alpha$)= 1 or c($\alpha$)=2).  \textbf{Proof:} this directly follows from the lemmas above and the pigeonhole principle. 
\end{theorem}

\begin{conjecture}
For a structured dataset, the number of stable cells $K_{stable}$ is much smaller than the total number of cells $K$ (see also \cite{hanin2019complexity}).
\end{conjecture}
\begin{conjecture}\label{conj_d}
Even in the case $d<K_{stable}$, multiple $\w$ can converge to the same stable cell, depending on the structure of the dataset. 
\end{conjecture}
In appendix \ref{app:stable_cells}, we propose an algorithm to efficiently explore the space and estimate the number of stable cells. For the CIFAR-10 based datasets we consider, we consistently find an estimate for the number of stable cells in the range $K_{stable}=10^1 - 10^2$, a tiny fraction compared to the total number of cells (see  appendix \ref{app:stable_cells}) . Then, for Conj. \ref{conj_d}, consider the basin of attraction for each stable cell: the region of space over which, if a weight vector is initialized in this space, it will converge to the stable cell. We hypothesize that structure in the data, e.g., dense regions versus sparse regions, results in different sizes of the basins of attraction. Neurons are thus not uniformly divided over stable cells, and some stable cells could attract a lot of neurons while others might have no neurons initialized in their (small) basin of attraction. \\
We will now argue why the initial dynamics of the network adhere to the simplified dynamics. First note that from the general dynamics we observe:

\begin{lemma}\label{lemma:decoupling}\textbf{Initial Direction-Norm Decoupling (cfr. \cite{atanasov_neural_2022})}
The ratio of gradient updates is proportional to $  \frac{\partial L^s}{ \partial (\theta_{\ww})}  /  \frac{\partial L^s}{ \partial ||\vec{w}^{(2)}_{\alpha}||} \sim  \tan( \theta_{\vec{\Delta y^{s}}}- \theta_{\ww} )$
A similar observation holds for $\theta_{\w,i}$.  \textbf{Proof:} This can be directly observed from the gradient equations, when rewritten as eqs. \ref{eq:grad_phi_alpha} - \ref{eq:grad_norm_w_alpha}).
\end{lemma}
This means that, in the early phase of learning, the weight vectors $\w$ and $\ww$ that are far (in angular distance) to their target angles, will receive updates on their angles that are orders of magnitude higher than updates on their norms (illustrated in appendix fig. \ref{app:decoupling} ). As we discussed before, this initial decoupling of norm updates from angle updates has been called ‘silent alignment' \cite{atanasov_neural_2022}: starting from small initial conditions, vectors generally align to their initial target direction \emph{before} their norms start to increase and the loss starts to drop. The outgoing weights $\ww$ are 2D vectors, and crucially, they have fixed targets as $\theta_{\vec{\Delta y^{s}}} = \frac{7\pi}{4} \vee \frac{3\pi}{4} $ at all times. A detailed analysis can be found in appendix \ref{app:ww}. As the vectors are randomly initialized in a 2D space, they start out relatively closer to the target vectors (as compared to vectors initialized in a high dimensional space) and since they have fixed target vectors, we can assume they align faster than the incoming weights $\w$. We can divide the incoming weights in two categories: neurons that go through fast, silent alignment (phase 1), because they are initialized outside their stable cells, and neurons that are initialized in their stable cell. The latter group immediately starts out in phase 2 (see next).  Together, this justifies the assumptions underlying the simplified dynamics for the considered neurons in this phase: $b_\alpha =0$, $||\w||$ constant and small due to silent alignment of $w$; the growth of   $||\ww||$ depends on the growth of  $||\ww||$ and therefore also remains small, and $\ww$ aligns faster than $\w$. 

\subsection{Phase 2: Unlocking}


Now we consider the next phase: a group $\w$ has entered their shared stable cell but are not fully aligned to their target direction. We will show that they obtain an exponential difference in their norms, based on their relative angular distance to the shared target direction. Let $\delta_{\alpha,i}$ be the remaining angular distance $\delta_{\alpha,i} = |\theta_{\vec{\gamma}^{*,init}_{c(\alpha),k},i } - \theta_{\w,i}|$. 

\begin{definition}\textbf{Simplified dynamics phase 2} 
For a neuron $\alpha$, we make the following assumptions whenever we assume 'simplified dynamics' in this phase: $b_\alpha =0$, $\theta_{\w}$ constant, $||\w||$ and $||\ww||$ both small (but not constant), and $\theta_{\ww} = \frac{7\pi}{4} + 2\pi$  or  $\theta_{\ww} = \frac{3\pi}{4} + 2\pi$.
\end{definition}

The assumptions of the simplified dynamics for this phase are valid because, in this phase, we have the inverse effect of the direction-length decoupling (see lemma \ref{lemma:decoupling}): directions are updated much slower as compared to norms. Apart from that, the bias starts out orders of magnitude smaller than the weights (as $b_\alpha = 0$ initially), hasn't significantly grown before this phase, and thus still has little influence $b_\alpha \approx 0$.  Under these simplified dynamics, we find:

\begin{theorem}
    Consider the product $a_{\alpha} =  ||  \vec{w}_{\alpha}^{(2)} || \; ||  \vec{w}_{\alpha}^{(1)} ||$. Under the simplified dynamics for this phase, we have:
    \vspace{-3pt}
    \begin{equation}
         a_{\alpha}(t) \sim  e^{ \lambda \; || \vec{\gamma}^{*,init}_{c(\alpha),k}|| \;  cos(\delta_\alpha) \; t}
         \label{eq:sol_a_exp}
    \end{equation}\vspace{-3pt}
  \textbf{Proof outline}: We have $
    \partial L / \partial ||  \vec{w}_{\alpha}^{(1)} || = - ||w^{(2)}_{\alpha}||  || \vec{\gamma}^{*,init}_{c(\alpha),k}||  \cos(\delta_\alpha) ,  \; 
     \partial L  / \partial ||  \vec{w}_{\alpha}^{(2)} ||   = -  ||w^{(1)}_{\alpha}|| \;  || \vec{\gamma}^{*,init}_{c(\alpha),k} || \;cos(\delta_\alpha) $, a coupled set of equations. Under the simplified dynamics for this phase, we can  then derive $
     \frac{\partial a_\alpha}{ \partial t } = \lambda  || \vec{\gamma}^{*,init}_{c(\alpha),k}|| cos(\delta_\alpha) a_{\alpha}\label{eq:evo_a}$  (detailed proof see appendix \ref{app:unlocking}).
\end{theorem}

Consider two neurons, $\alpha$ (well-aligned) and $\beta$ (less aligned), i.e.,  $\delta_\alpha > \delta_\beta$. During the unlocking phase, the ratio of their norms evolves as $\frac{a_{\alpha}(t)}{a_{\beta}(t)} \sim  e^{\lambda \|\gamma\| (\cos \delta_{\alpha} - \cos \delta_{\beta}) t}$. This indicates that even a marginal advantage in alignment is amplified exponentially. While we cannot exactly describe the dynamics beyond this phase, we find experimentally that this exponential difference in loss obtained early in training \emph{largely persists} to the end of training. We conjecture that this is the case exactly because it is exponential in nature, and because the racing nature of the dynamics after this point reduce the ability of other neurons to catch up (see next section). \\
\textbf{Note} As the norms are starting to grow in this phase, this phase generally corresponds to the period of training right before and after the loss drops. In fact, it's more precise to consider the drop in the loss over the \emph{effective} samples of this group, not over the total dataset (see note on asynchronous learning speed in the next section).


\subsection{Phase 3: Racing}

As the magnitudes of the weight vectors start to increase, the loss starts to considerably drop and $\vec{\Delta y}^s$ and $\langle \vec{\gamma}_{\alpha}^s \rangle$ are no longer constant, the biases grow and change the nature of the gating vectors, and all equations are truly coupled. We can no longer describe the dynamics using simplifications, but we conjecture there is a race ("winner-takes-all") dynamics between different neurons of the same group: 

\begin{conjecture}
   For neurons in the same group indexed by $\beta$, the dynamics of their norms after the unlocking phase can be described by a set of equations:
   \begin{equation}
   \frac{\partial a_\beta}{ \partial t } \approx \lambda \;  || \langle \vec{\gamma}^s \rangle (t) || \; cos(\delta_\beta) \; a_{\beta},
   \end{equation}
   one equation for each neuron $\beta$, with the value $ || \langle \vec{\gamma}^s \rangle (t) ||$ shared between all neurons of the group.  The value $ || \langle \vec{\gamma}^s \rangle (t) ||$ decreases as the error on the effective dataset decreases (see eq. \ref{eq:gamma}). If one neuron decreases the error through an increase in its norm, it exerts a lateral inhibition on the growth of the other neurons in the group through a decrease in the shared value $ || \langle \vec{\gamma}^s \rangle (t) ||$ for all neurons. 
\end{conjecture}

Thus, during the unlocking phase, the neurons that are closest to the target directions obtain the highest norms, and therefore contribute most to the decrease in error vectors and loss. The decreased loss subsequently slows down the learning for \emph{all} neurons of the group. This can be seen as a kind of race between neurons: the neurons that arrive first near the target direction exponentially grow in norm compared to those further away, and those early arriving neurons are the ones mainly solving the effective task (i.e., the task defined by the effective dataset) through reducing the loss. In other words, they win the race to become the important neurons in solving this specific (sub)task.

\section{Relationship to Effective Capacity}

We consider two ways in which the theoretical capacity, given by the number of neurons $d$, can be reduced to an effective capacity:  neurons can "play the same role" and could therefore be merged to a single effective neuron without changing the computations performed by the network. The second reduction we consider is the case where neurons are "not playing a meaningful role", i.e., do not contribute to the computations of the network, and could effectively be pruned. The hallmark of this is a low norm for the neuron's associated weight vectors as compared to other neurons in the network. \\
Our theoretical framework explains how both phenomena arise from training with gradient descent. Two neurons $\alpha$, $\beta$ that both very closely align to a mutual (initial) target direction can be merged to an effective neuron with effective norm effective norm: $||\mathbf{w}^{tot}_{effective}|| = \sqrt{ ||\mathbf{w}^{tot}_{\alpha}||^2 + ||\mathbf{w}^{tot}_{\beta}|| ^2 }$. On the other hand, between two neurons $\alpha, \beta$ with the same initial target direction, if neuron $\alpha$ arrives (much) earlier and thus aligns more closely than neuron $\beta$, its norm will grow much more than the norm of neuron $\beta$, and it will contribute much more to solving the task associated to their effective dataset. Neuron $\beta$, with a lower final norm,  will therefore thus be a better candidate for norm based pruning. Some important considerations:\\
\textbf{Note 1: Lottery Tickets} The IMP algorithm (iterative magnitude based pruning \cite{frankle_lottery_2019}) used to find lottery tickets, differs from our framework in some key ways: pruning of low norm weights happens on individual weights, over the whole network, and in rounds of pruning and retraining. Our framework concerns the differences in the norms of neurons, not individual weights, and this \emph{within} but not \emph{between} groups of neurons. In the experimental section, we explore how our framework relates to the results obtained with the IMP algorithm. \\
\textbf{Note 2: Predictions at Initialization and Early in Training}  The beneficial initial conditions that would make a neuron a ‘winning ticket' are then these initializations that lead to a smaller $\delta_{\alpha,i}$ compared to the other neurons in the group. In principle, there is no way to exactly predict this at initialization, as we would need to know the evolution of the neurons and their gating vectors in the first steps of training to exactly predict each $\delta_{\alpha,i}$. However, we find experimentally that the angular distance computed to target vectors \emph{computed at initialization and early points in training} is already predictive of the final relative norm of each neuron (see experiments).  \\
\textbf{Note 3: Asynchronous Learning Speeds} the different groups evolve at different speeds, because the effective learning rate
 is determined by $|| \langle \vec{\gamma}_{\alpha}^s \rangle ||$ (eq. \ref{eq:gamma}), which is in itself determined by both the norms and the number of samples in the effective dataset. Unlocking thus happens at different times for the different groups of neurons in the network. In practice, norm-based pruning at a fixed time early in training may therefore prematurely discard "slow" groups that have yet to enter their unlocking phase.

 \section{Experiments}

 \textbf{General Settings} We created binary classification datasets from CIFAR-10 images: class 1 = ‘airplane' + ‘cat', class 2 = ‘car' + ‘bird', grayscaled and flattened to $1024 \times1$ input vectors. We used batch size 4, learning rate  $\lambda=0.001$ and SGD with momentum 0.9. Our default set of experiments consisted of networks with  d=300 (number of neurons), $std = 0.00001, 0.001 $ (Gaussian initialization) and $N=200, 800, 4000$ (number of samples, balanced). Code is available at [will be added upon acceptance].


\textbf{Computing the alignment} In our theoretical framework, the alignment is given by $\delta_{\alpha,i} = |\theta_{\vec{\gamma}^{*,init}_{c(\alpha),k},i } - \theta_{\w,i}|$, computed based on knowledge of the stable cell. In practice, we cannot straightforwardly predict the stable cell for each neuron at initialization. We compute instead the angular distance to the target vector of the \emph{current} cell   $\overline{\delta_{\alpha}}(t)  = |\theta_{\overline{\langle \vec{\gamma}_{\alpha}^s \rangle} (t)} - \theta_{\w}| $. Here $\theta_{\overline{\langle \vec{\gamma}_{\alpha}^s \rangle} (t)}$ is computed based on the angle $\ww$ will converge to estimated from the current dataset (see theorem in \ref{app:target_outgoing}). In this way, the results also illustrate to what degree the final norms can be predicted from computations at the initialization and very early in training, without information about later timepoints.

\textbf{Reduced effective capacity due to mutual alignment} In a first set of experiments, we consider datasets at different levels of complexity through increasing the dataset size, and we study how the effective capacity is adapted through the existence of highly aligned clusters of neurons that can be merged. In fig. \ref{fig:clustering}(a) we show the number of clusters obtained at cos\_sim threshold 0.95 for models of different sizes $d$ at the end of training, in function of the dataset size N (std=0.00001, average over 5 runs). A higher number of clusters means a higher effective capacity. As expected, the different models use more and more effective capacity as the complexity of the dataset increases, until they saturate their capacity in terms of clusters ($d=n_{clusters}$). Note that these results depend on the selected cluster threshold. In fig. \ref{fig:clustering}(b) we show the evolution of the clustering over 3 different timepoints: initalization, before the loss drops, and at the end of training, for the default set of experiments, using cos\_sim threshold = 0.95. This shows that increasing the initialization scale, thus reducing the initial period of norm-direction decoupling, also increases the number of clusters (i.e., neurons are less closely aligned, thus there are more different clusters). This is as expected, as the period over which neurons can converge in direction before the nature of the dynamics change is reduced. We also find that clusters based on a threshold cos\_sim $\geq0.95$ can be merged without loss in performance (see appendix \ref{app:exp_res}).

\begin{figure}[h!]
    \centering
    \includegraphics[width=\linewidth]{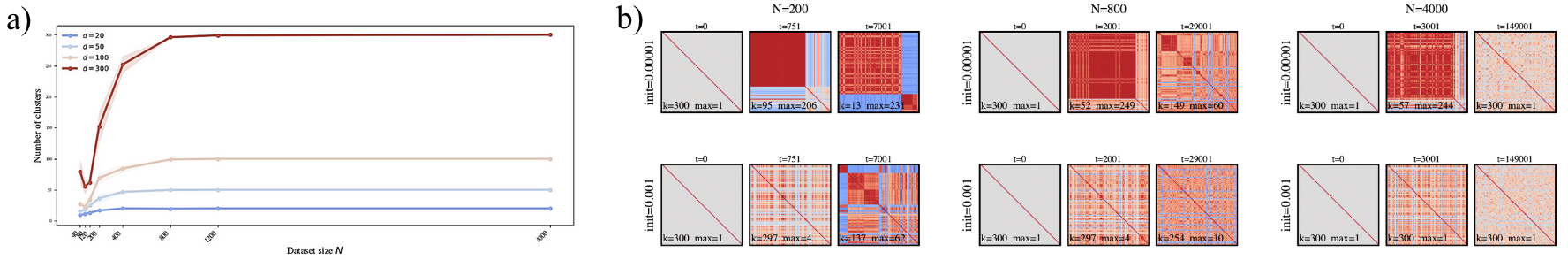}
    \caption{Results of cluster detection. (a) number of clusters in terms of dataset size for models with different number of neurons,  std=0.00001, average over 5 runs. (b) clustering results visualized at different points in time for default set of experiments  (upper row small init, lower row medium init).    }
    \label{fig:clustering}
\end{figure}

\textbf{Exponential growth in norm in function of alignment } In fig. \ref{fig:norm_alignment}, we show the norm $\alpha(t)$ in function of $\overline{\delta_{\alpha}}(t)$ computed at t=0 and a timepoint early in training for all neurons in the network (in fig. \ref{fig:main_figure}, we visualize only the largest cluster) (see appendix fig. \ref{app:loss_curves} for the loss curves). The colors correspond to the \emph{final} norms $a(t=t_{final})$, with darker color a higher value. The distributions at initialization are already predictive of the final norm to a large degree. However, after just a few gradient updates, the outgoing weights have aligned and the dynamics more closely follow the simplified dynamics. We then clearly see the exponential growth of the norm in function of the alignment for all experiments. Note that groups evolving at different speeds are 'mixed' in these visualizations: some neurons of slower groups still need to go through the process of exponential increase at the chosen timepoints.

\begin{figure}[h!]
    \centering
    \includegraphics[width=0.65\linewidth]{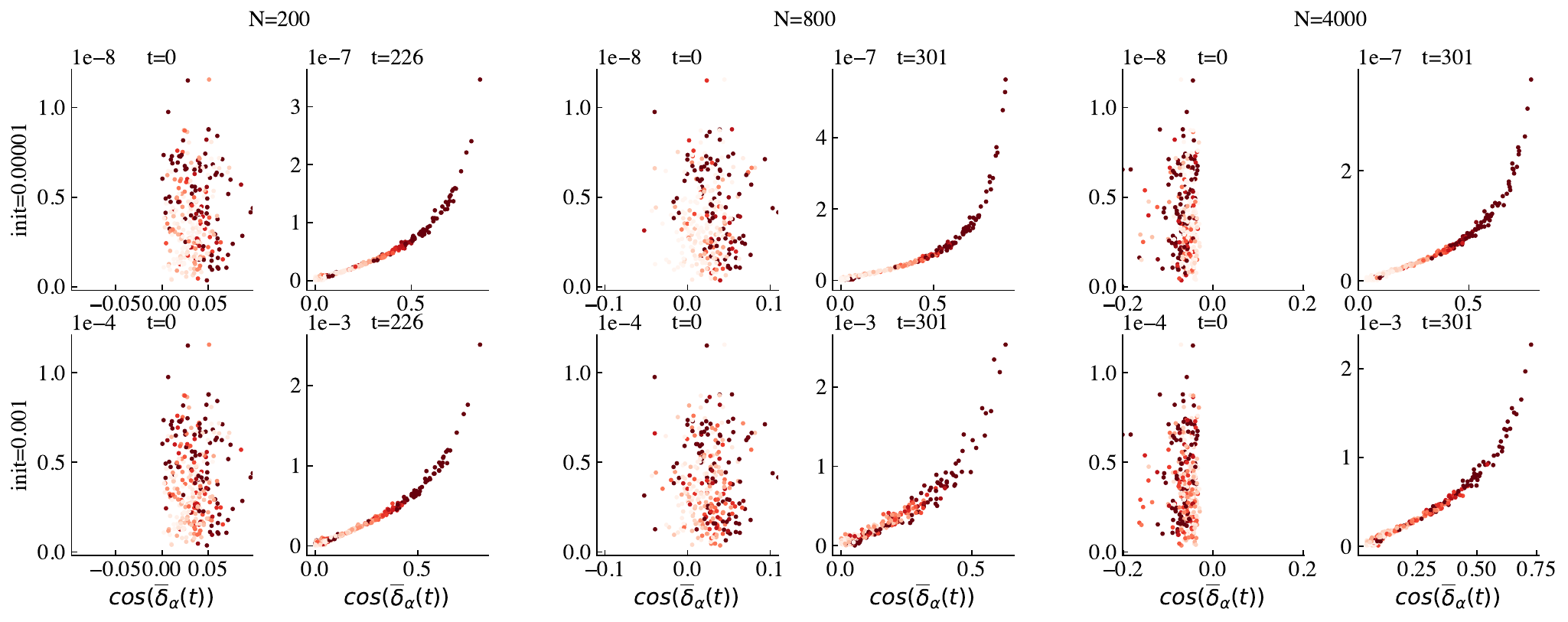}
    \caption{ $a_{\alpha}(t)$ in function of $cos(\overline\delta_\alpha(t)$ computed at t=0 and a timepoint early in training for the default set of experiments. Upper row small init, lower row medium init. Colors correspond to final norm values $a_{\alpha}(t)$. Early in training, we can clearly see the exponential growth of the norm in function of the alignment for all experiments. }
    \label{fig:norm_alignment}
    \vspace*{-10pt}
\end{figure}

\textbf{Comparison to Lottery Tickets} We used the IMP algorithm (\cite{frankle_lottery_2019}) to prune individual weights in the networks to different desired sparsity levels (20\%, 50\%, 70\%,) in 3 rounds.  In fig. \ref{fig:LTH}, we show the resulting number of parameters pruned per neuron in function of the alignment of each neuron, computed at timepoints very early in training for the original networks. Although the context of IMP/Lottery tickets is different (i.e., individual weights vs. neurons, multiple rounds of retraining), we can see that computing the alignment very early in training can be used to predict, to a large degree, which neurons will have a lot of individual parameters pruned when using IMP for lower target sparsities. This works even better if we can split the neurons based on their clusters (see fig. \ref{fig:main_figure}), but this requires knowledge of the clusters.

\begin{figure}[h!]
    \centering
    \includegraphics[width=0.7\linewidth]{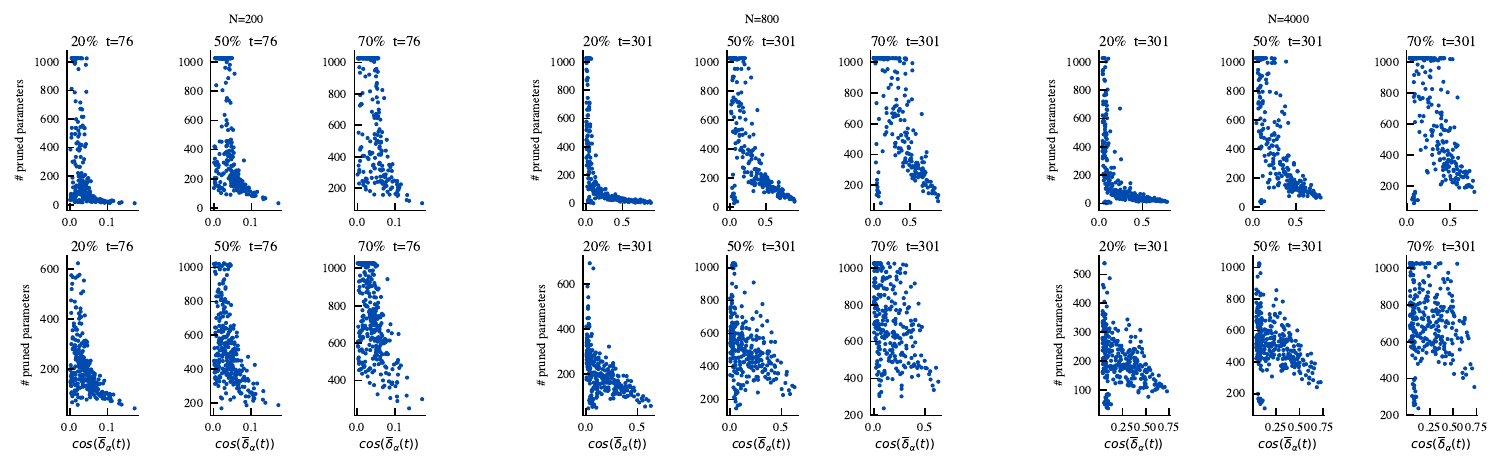}
    \caption{Lottery tickets: number of pruned individual parameters for each neuron in function of $cos(\overline\delta_\alpha(t)$ computed at a timepoint early in training for the default set of experiments, per target sparsity (20\%, 50\%, 70\%). Upper row small init, lower row medium init.We can see that computing the alignment very early in training can be used to predict, to a large degree, which neurons will have a lot of parameters pruned, for lower target sparsities.}
    \label{fig:LTH}
\end{figure}

 \section{Limitations and Conclusion}

In this paper, we have developed a new framework to analyze the early dynamics of gradient descent at the level of individual neurons, and this for single hidden layer ReLU networks for binary classification in the feature learning regime. We identified three dynamical principles, mutual alignment, unlocking and racing, that allowed us to explain how gradient descent adapts a network's theoretical capacity to the task. We specifically showed why some neurons converge in direction to a shared, task-related  target direction; if they align very closely, they can be merged to a single, equivalent neuron. We moreover explained that the beneficial initial conditions of lottery tickets are in fact their favorable initial alignment to task-related target directions. And we identified the lottery selection mechanism as a race between neurons early in training that arises because neurons that align faster to their target direction obtain an exponentially higher norm.\\ 
The limitations of our work are the following: our theory or experiments do not cover deeper networks or networks with more advanced architectures. Our theory does not cover higher initilization scale, but we study this in our experiments. The assumption of aligned outgoing weigths does not hold at initialization but only soon after, such that our predictions of final norm are more accurate after a few gradient updates. Finally, our theory does not cover the dynamics beyond the initial phase of learning. Still, we argue that our results can serve as a theoretical basis to explain a range of earlier observations about lottery tickets: they can be largely predicted early in training \cite{frankle_early_2020, frankle_linear_2020}; the sign of the initialization (hence, the orientation) is more important than the values \cite{zhou2019deconstructing}; and as we found the tickets consist of neurons closely aligned to task-relevant directions, they can generalize across similar datasets \cite{morcos_one_2019}. With more extensions of our framework to advanced architectures, it could serve as a basis to understand capacity large, modern neural networks.

\clearpage

\bibliographystyle{unsrtnat}
\bibliography{references}

\appendix
\onecolumn



\section{Equations of gradient descent}\label{app:eq_grad_descent}

We want to analyze the gradient of the loss with respect to the angles and norms of the ingoing and outgoing weight vectors of each neuron. To achieve this, we start with computing the components of the gradient of the loss $L^s$ with respect to the logits $z^s_k$, for an input sample s. These are given by the simple expression:
\begin{equation}
\frac{\partial L^s}{\partial z_k} = - (y^s_k-\hat{y}^s_k) =  - \Delta y_k^s 
\end{equation}
due to the interaction between the cross-entropy loss and the softmax function. This gradient is then propagated to the outgoing weights of each neuron. We obtain:
\begin{equation}
    (\frac{\partial L^s}{\partial \vec{w}^{(2)}_{\alpha}})^{T} = - g^{s}_\alpha \vec{\Delta y^s} h_{\alpha}^s  
\end{equation}
and
\begin{equation}
    \frac{\partial L^s}{\partial h_\alpha^s} = - g^{s}_\alpha ||\vec{\Delta y^s}|| \; || \vec{w}^{(2)}_{\alpha} || \cos{(\phi_{\Delta y^s} - \theta_{\ww} )}
\end{equation}

Using the chain rule:
\begin{align}
    \frac{\partial L}{\partial ||v||} & =    \frac{\partial L}{\partial v_x }  \frac{\partial v_x }{\partial ||v||} + \frac{\partial L}{\partial v_y }  \frac{\partial v_y }{\partial ||v||} \\ 
    \frac{\partial L}{\partial \theta} & =    \frac{\partial L}{\partial v_x }  \frac{\partial v_x }{\partial \theta} + \frac{\partial L}{\partial v_y }  \frac{\partial v_y }{\partial \theta}  
\end{align}
together with the identities $sin(a-b) = sin(a)cos(b) - cos(a)sin(b)$  and $cos(a-b) = cos(a)cos(b) + sin(a)sin(b) $, and after averaging over the effective dataset, we arrive at:
\begin{align}
      \langle  \frac{\partial L^s}{ \partial \theta_{\ww}} \rangle &=- \langle h_{\alpha}^s  \: ||\Delta  y^s||  \: ||w^{(2)}_{\alpha}|| \, \sin{(\phi_{\Delta y^s} - \theta_{\ww} )} \rangle \\
        \langle \frac{\partial L^s}{ \partial ||w^{(2)}_{\alpha}||} \rangle & = - \langle  h_{\alpha}^s  \: ||\Delta  y^s||   \, \cos{(\phi_{\Delta y^s} - \theta_{\ww} )} \rangle \\
    \end{align}

For the derivatives to $||\w||$ and $\theta_{\w,i}$, we use:
\begin{equation}
    \frac{\partial L^s}{\partial ||\w||} =  \frac{\partial L^s}{\partial h_\alpha^s} \frac{\partial  h_\alpha^s}{\partial  ||\w||}= \frac{\partial L^s}{\partial h_\alpha^s} \frac{\partial (  \w \cdot \vec{x}^s)}{\partial  ||\w||}
\end{equation}
Using the linearity of the derivative, we can move the factors of $ \frac{\partial L^s}{\partial h_\alpha^s}$ to the inner product, and obtain:
\begin{align}
    \langle \frac{\partial L^s}{ \partial (\theta_{\w,i})} \rangle & = - ||\vec{w}^{(2)}_{\alpha}||  \frac{\partial \vec{w}^{(1)}_{\alpha} \cdot \langle\vec{\gamma}_{\alpha}^s \rangle  }{\partial (\theta_{\w,i})}
     \\
       \langle \frac{\partial L^s}{ \partial ||\vec{w}^{(1)}_{\alpha}||} \rangle & = - ||\vec{w}^{(2)}_{\alpha}||  \frac{\partial \vec{w}^{(1)}_{\alpha} \cdot \langle\vec{\gamma}_{\alpha}^s \rangle  }{\partial ||\w|| }
\end{align}
with 
\begin{align}
   & \langle \vec{\gamma}_{\alpha}^s \rangle = \frac{1}{N} \sum^N_s g^{s}_\alpha  \cos(\phi_{\Delta y^{s}} - \theta_{\ww}) \, ||  \Delta \vec{y}^s  || \,  \vec{x}^{s}.
\end{align}
The derivative with respect to the bias is given by:
\begin{equation}
    \langle\frac{\partial L^s}{\partial b_\alpha}\rangle = \langle \frac{\partial L^s}{\partial h_\alpha^s} \frac{\partial h_\alpha^s}{\partial b_\alpha} \rangle = - || \vec{w}^{(2)}_{\alpha} || \; \langle ||\vec{\Delta y^s}|| \;  \cos{(\phi_{\Delta y^s} - \theta_{\ww} )} \rangle
\end{equation}

In total, this yields:
\begin{align}
      & \!\langle \frac{\partial L^s}{ \partial \theta_{\ww}} \rangle =- ||\vec{w}^{(2)} _{\alpha}|| \; \langle h_\alpha^s \;  ||\vec{\Delta y}^{s}|| \;  
      \sin{(\phi_{\vec{\Delta y^{s}}}- \theta_{\ww}}) \rangle  \label{app_eq:grad_phi_alpha} \\ 
     & \! \langle \frac{\partial L^s}{ \partial ||\vec{w}^{(2)}_{\alpha}||} \rangle =  - \langle h_\alpha^s \;  ||\vec{\Delta y}^{s}|| \;  \cos{(\phi_{\vec{\Delta y^{s}}}- \theta_{\ww}}) \rangle \label{app_eq:grad_norm_ww_alpha} \\ 
      & \! \langle \frac{\partial L^s}{ \partial (\theta_{\w,i})} \rangle \! = \! - \!||\vec{w}^{(2)}_{\alpha}|| \; ||\vec{w}^{(1)}_{\alpha} || \; ||\langle\vec{\gamma}_{\alpha}^s \rangle  || \; \sin{(\theta_{\langle\vec{\gamma}_{\alpha}^s \rangle,i } - \theta_{\w,i})} \label{app_eq:grad_theta_alpha} 
     \\ 
      & \! \langle \frac{\partial L^s}{ \partial ||\vec{w}^{(1)}_{\alpha}||} \rangle  = - ||\vec{w}^{(2)}_{\alpha}|| \; ||\langle\vec{\gamma}_{\alpha}^s \rangle  || \;\cos{(\theta_{\langle\vec{\gamma}_{\alpha}^s \rangle,i } - \theta_{\w,i})}\\
       & \langle\frac{\partial L^s}{\partial b_\alpha}\rangle = - || \vec{w}^{(2)}_{\alpha} || \; \langle ||\vec{\Delta y^s}|| \;  \cos{(\phi_{\Delta y^s} - \theta_{\ww} )} 
       \rangle 
      \label{app_eq:grad_b}
    \end{align}

\section{Dynamics of the Outgoing Weights $\ww$}\label{app:ww}

\begin{figure}
    \centering
    \includegraphics[width=\linewidth]{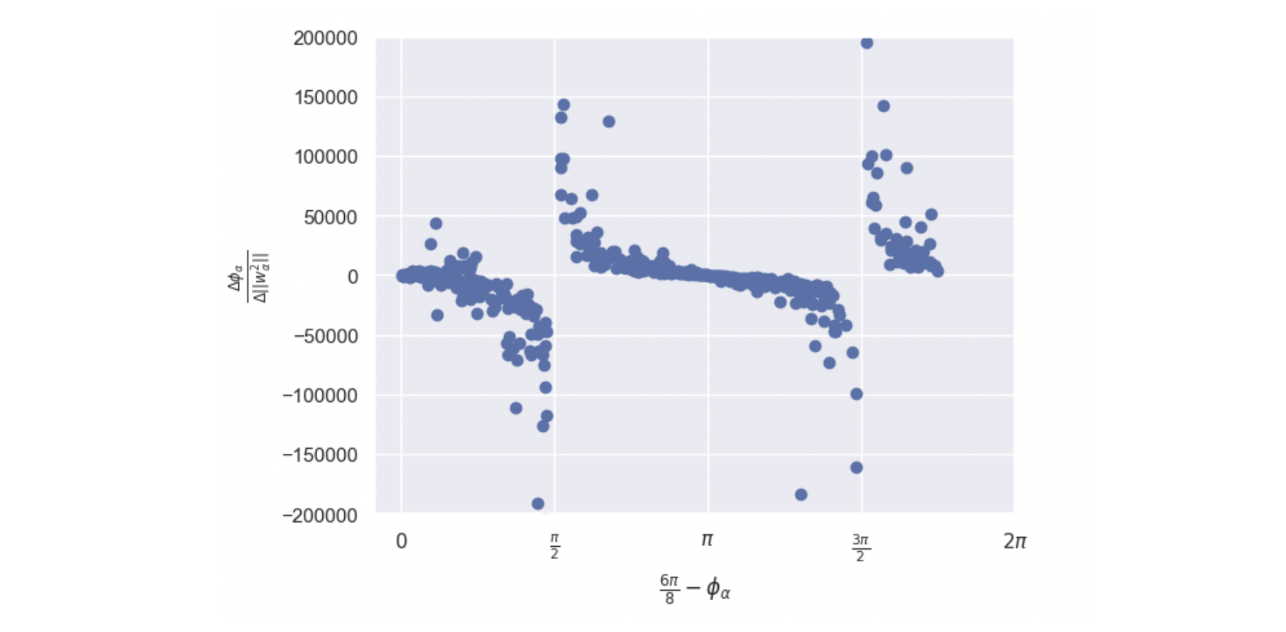}
    \caption{Ratio $  \frac{\partial L^s}{ \partial (\theta_{\ww})} /   \frac{\partial L^s}{ \partial ||\vec{w}^{(2)}_{\alpha}||} $ in function of $\frac{3\pi}{4}- \phi_{\alpha}$ at initalization, for an experiment with 1000 neurons to better visualize the $tan$ function. Note the orders of magnitude on the y-axis. }
    \label{app:decoupling}
\end{figure}

\subsection{The direction of the gradient vectors is fixed} \label{app:fixed_target}
As the predicted output $\hat{y}^s$ is normalized through the softmax, we can write it as $\hat{y}^s = [p, 1-p^s]^T$, for some $p^s \in [0,1]$ depending on the timestep t. We thus obtain
\begin{align}
   \Delta  y^s   = \begin{bmatrix}
    1 \\
    0 \\
\end{bmatrix} -  \begin{bmatrix}
    p \\
    1-p \\
\end{bmatrix} = \begin{bmatrix}
    1-p \\
    -(1-p) \\
\end{bmatrix} or \; 
   \Delta  y^s   = \begin{bmatrix}
    0 \\
    1 \\
\end{bmatrix} -  \begin{bmatrix}
    p \\
    1-p \\
\end{bmatrix} = \begin{bmatrix}
    -p \\
    p \\
\end{bmatrix},
\end{align}

This means that while the magnitude of the vectors $ \vec{\Delta  y}^s $ can change during training, their directions stay constant. The angles $\theta_{\vec{\Delta y}^s}$ are thus fixed at $\theta_{\vec{\Delta y}^s}=\frac{7 \pi}{4}$ for samples of the first class, and at $\theta_{\vec{\Delta y}^s}=\frac{3 \pi}{4}$ for samples of the second class.   \\

\subsection{Target directions}

\begin{theorem} \label{app:target_outgoing}
The target direction for the outgoing weight vector $\theta_{\ww}^*$  is given by: 
\begin{align}
& \theta_{\ww}^* =  \theta_{\ww}^{*1} \equiv  \frac{7 \pi}{4} + n 2\pi \; \text{when} \; \w  \vec{\xi}_1 >  \w \vec{\xi}_2 \\
& \theta_{\ww}^* =  \theta_{\ww}^{*2} \equiv \frac{3 \pi}{4} + n 2\pi \; \text{when} \;  \w  \vec{\xi}_1 <  \w \vec{\xi}_2 \\
&  \vec{\xi}_1 \equiv N^{e}_1 \langle ||\Delta  y^s|| \, \vec{x}^s  \rangle_{C_1}, \; \vec{\xi}_2  \equiv N^{e}_2 \langle ||\Delta  y^s|| \, \vec{x}^s  \rangle_{C_2}.  
\end{align}
\end{theorem}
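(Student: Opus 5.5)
The plan is to treat the outgoing weight $\ww$ as a 2D vector in polar coordinates $(||\ww||,\theta_{\ww})$. Its angular dynamics follow eq.~\ref{app_eq:grad_phi_alpha}, and we want the stable zero of the angular gradient.

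\textbf{Step 1: collapse the sample sum to two class terms.} By Appendix~\ref{app:fixed_target}, $\theta_{\vec{\Delta y}^s}$ equals $\frac{7\pi}{4}$ for every class-1 sample and $\frac{3\pi}{4}$ for every class-2 sample. So the sine factor takes only two values per class. With $h^s_\alpha=\w\cdot\vec{x}^s$ on effective samples (zero bias), we write
\begin{equation*}
N\langle \frac{\partial L^s}{\partial \theta_{\ww}}\rangle = -||\ww||\Big[\sin(\tfrac{7\pi}{4}-\theta_{\ww})\,\w\cdot\vec{\xi}_1 + \sin(\tfrac{3\pi}{4}-\theta_{\ww})\,\w\cdot\vec{\xi}_2\Big].
\end{equation*}
Since $\frac{3\pi}{4}=\frac{7\pi}{4}-\pi$, the second sine equals $-\sin(\frac{7\pi}{4}-\theta_{\ww})$. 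The bracket therefore factorizes as
\begin{equation*}
\sin(\tfrac{7\pi}{4}-\theta_{\ww})\,\big(\w\cdot\vec{\xi}_1-\w\cdot\vec{\xi}_2\big).
\end{equation*}
The same manipulation turns the norm gradient (eq.~\ref{app_eq:grad_norm_ww_alpha}) into $-\cos(\frac{7\pi}{4}-\theta_{\ww})(\w\cdot\vec{\xi}_1-\w\cdot\vec{\xi}_2)$. Equivalently, the full gradient of $\ww$ points along $\pm(1,-1)/\sqrt2$.

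\textbf{Step 2: stability analysis of the angular flow.} Under gradient flow,
\begin{equation*}
\dot\theta_{\ww}\propto ||\ww||\,D\,\sin(\tfrac{7\pi}{4}-\theta_{\ww}), \qquad D=\w\cdot(\vec{\xi}_1-\vec{\xi}_2).
\end{equation*}
If we hold $D$ fixed, the fixed points are $\theta_{\ww}=\frac{7\pi}{4}+n\pi$. We then linearize: the derivative of the right-hand side at $\theta_{\ww}=\frac{7\pi}{4}$ is $-||\ww||D$. Hence $\frac{7\pi}{4}$ (mod $2\pi$) is the attracting point when $D>0$, i.e.\ when $\w\vec{\xi}_1>\w\vec{\xi}_2$. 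When $D<0$ the stable point shifts by $\pi$ to $\frac{3\pi}{4}$. This is the same second-derivative argument used for Lemma~\ref{lemma:decoupling}/the cell lemma. A cleaner alternative is to observe that the loss is locally linear in $\ww$ with gradient $-D(1,-1)/\sqrt2$, so $\ww$ moves along a fixed direction and its angle tends to that direction.

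\textbf{Step 3 (main obstacle): justify treating $D$ as slowly varying.} $D$ depends on $\w$, on $||\Delta y^s||$ and on the gating. I would invoke the setting of the phase-1 simplified dynamics: $||\ww||$ and $||\w||$ are small, so $\hat y^s\approx(\frac12,\frac12)$, $||\Delta y^s||\approx\frac{\sqrt2}{2}$, and $\ww$ aligns on a faster time scale than $\w$ (the 2D argument given in the main text). Then $\theta_{\ww}$ relaxes to the target determined by the current sign of $D$, and $\theta^*_{\ww}$ should be read as a quasi-static target that switches whenever $D$ changes sign. Making this adiabatic separation rigorous is where I expect the difficulty. I would state it as a target in the same sense as the $\vec{\gamma}^*$ targets: the fixed point of the angular equation at frozen $\w$ and $\Delta y$.
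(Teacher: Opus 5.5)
Your proposal is correct and follows essentially the same route as the paper. The paper likewise uses the fixed angles $\frac{7\pi}{4}$ and $\frac{3\pi}{4}$ of $\vec{\Delta y}^s$ to split the angular gradient into a class-1 sum and a class-2 sum. It factors out $\sin(\frac{3\pi}{4}-\theta_{\ww})$ to get fixed points $\frac{3\pi}{4}+n\pi$, and reads off stability from the sign of $\sum_{C_1} g^s_\alpha h^s_\alpha \|\Delta y^s\| - \sum_{C_2} g^s_\alpha h^s_\alpha \|\Delta y^s\|$, which is exactly your $D=\w\cdot(\vec{\xi}_1-\vec{\xi}_2)$ once $b_\alpha=0$. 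The paper freezes these sums without comment, so your Step 3 is extra care rather than a missing piece. It also matters less than you fear: the sine factor vanishes at the fixed points, so any dependence of $D$ on $\theta_{\ww}$ or on the other parameters drops out of the linearization there.
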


\textbf{Proof: } We can determine the final angles  $\theta^*_{\ww} $ the angles $\theta_{\ww} $ will converge to from computing fixed points. This means we determine the $\theta^*_{\ww} $ for which $ \frac{\partial \theta_{\ww}^*}{\partial t}  = - \lambda \langle \frac{\partial L^s}{ \partial \theta_{\ww}^*} \rangle = 0$. We repeat the needed equation here for convenience:
\begin{equation}
      -\lambda  \langle \frac{\partial L^s}{ \partial \theta_{\ww}} \rangle = \lambda ||\vec{w}^{(2)} _{\alpha}|| \; \langle h_\alpha^s \;  ||\vec{\Delta y}^{s}|| \;  
      \sin{(\phi_{\vec{\Delta y^{s}}}- \theta_{\ww}}) \rangle 
\end{equation}

Let $s \in C_1$ be the indices for samples of the first class, and $s' \in C_2$ be the indices for the second class. If we write the sums over the effective samples per class explicitly, we obtain that the following must hold:

\begin{align}
    & \sum_{s \in C_1 }   g^{s}_\alpha h_{\alpha}^s  ||\Delta  y^s||  \sin{(\frac{7 \pi}{4} - \theta_{\ww}^* )}  +  \sum_{s' \in C_2 }  g^{s'}_\alpha h_{\alpha}^{s'} ||\Delta  y^{s'}||   \sin{( \frac{3 \pi}{4} - \theta_{\ww}^* )} = 0 \\
         \iff  & \big[ - \sum_{s \in C_1 }   g^{s}_\alpha h_{\alpha}^s  ||\Delta  y^s||    +  \sum_{s' \in C_2 }  g^{s'}_\alpha h_{\alpha}^{s'}   ||\Delta  y^{s'}||    \big]  \sin{(\frac{3 \pi}{4} - \theta_{\ww}^* )}= 0.
\end{align}
The first factor is very unlikely to be exactly zero in practice, so we arrive at:
\begin{equation}
    \theta_{\ww}^* = \frac{3 \pi}{4}  + n \pi.
\end{equation}
Which of these possible fixed points, given by the sets $\frac{3 \pi}{4}  + 2n \pi$ and $\frac{7 \pi} {4}  + 2n \pi$, will the angle converge to? To answer this question, we can determine their stability through the second derivative. We have to determine whether the expression:
\begin{align}
     & - \Big( \sum_{s \in C1 }   g^{s}_\alpha h_{\alpha}^s  ||\Delta  y^s||  \cos{(\frac{7 \pi}{4} - \theta_{\ww}^* )}  + \sum_{s' \in C_2 }  g^{s'}_\alpha h_{\alpha}^{s'} ||\Delta  y^{s'}||   \cos{( \frac{3 \pi}{4} - \theta_{\ww}^* )} \Big)  \\
\end{align}
amounts to a positive (unstable fixed point) or negative value (stable fixed point). Let's start with the case $\theta_{\ww}^* = \frac{7 \pi}{ 3} + n \pi $. We find:
\begin{align}
     & - \Big( \sum_{s \in C1 }   g^{s}_\alpha h_{\alpha}^s  ||\Delta  y^s||  - \sum_{s' \in C_2 }  g^{s'}_\alpha h_{\alpha}^{s'} ||\Delta  y^{s'}||  \Big) < 0 \\
    & \iff \Big( \sum_{s \in C1 }   g^{s}_\alpha h_{\alpha}^s  ||\Delta  y^s||  - \sum_{s' \in C_2 }  g^{s'}_\alpha h_{\alpha}^{s'} ||\Delta  y^{s'}||  \Big) > 0\\
    & \iff \sum_{s \in C1 }   g^{s}_\alpha h_{\alpha}^s  ||\Delta  y^s|| > \sum_{s' \in C_2 }  g^{s'}_\alpha h_{\alpha}^{s'} ||\Delta  y^{s'}||
\end{align}
In other words, when $\sum_{s \in C1 }   g^{s}_\alpha h_{\alpha}^s  ||\Delta  y^s|| > \sum_{s' \in C_2 }  g^{s'}_\alpha h_{\alpha}^{s'} ||\Delta  y^{s'}||$, $\theta_{\ww}^* = \frac{7 \pi}{ 4} + n \pi $ is the stable fixed point. When the reverse holds, $\theta_{\ww}^* = \frac{3 \pi}{ 4} + n \pi $ is the stable fixed point.

\section{Dynamics of the incoming weights $\w$} 
\label{app:fixed_points}

For convenience, we here repeat the assumptions we make:
\begin{definition}\textbf{Simplified dynamics}  For a neuron $\alpha$, we make the following assumptions whenever we assume 'simplified dynamics': $b_\alpha =0$, $||\w||$ constant and small, $||\ww||$ constant and small, and $\theta_{\ww} = \frac{7\pi}{4} + 2\pi$ ("aligned to predict class 1") or  $\theta_{\ww} = \frac{3\pi}{4} + 2\pi$ ("aligned to predict class 2").
\end{definition}

\subsection{Lemma convergence in a stable cell} \label{app:proof_lemma_conv}

\begin{lemma} Consider $\w \in \mathcal{C}_k$ and simplified dynamics. As long as $\w \in \mathcal{C}_k$, $ \langle\vec{\gamma}_{\alpha}^s \rangle (t)$  is  a constant vector given by $\vec{\gamma}^{*,init}_{c(\alpha),k}$ and $\theta_{\w,i}(t) \rightarrow \theta_{\vec{\gamma}^{*,init}_{c(\alpha),k},i } $. 
\end{lemma}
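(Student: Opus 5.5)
The plan has two parts. First, show that $\langle\vec{\gamma}_{\alpha}^s\rangle$ is constant inside $\mathcal{C}_k$ and equals the cell's target vector. Then reduce the angular dynamics of $\w$ to a gradient flow of a linear functional on the sphere and read off its stable fixed point.

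\textbf{Step 1: $\langle\vec{\gamma}_{\alpha}^s\rangle$ is constant.} While $\w\in\mathcal{C}_k$ with $b_\alpha=0$, the gating vector is $\vec{g}_k$ by the cell lemma, so the set of summands in eq.~\ref{eq:gamma} is fixed. Under the simplified dynamics, $\theta_{\ww}$ is pinned at $\frac{7\pi}{4}$ or $\frac{3\pi}{4}$. By appendix~\ref{app:fixed_target}, $\theta_{\vec{\Delta y}^s}\in\{\frac{7\pi}{4},\frac{3\pi}{4}\}$ for every sample, so $\cos(\theta_{\vec{\Delta y}^s}-\theta_{\ww})=\pm1$: it is $+1$ for samples of class $c(\alpha)$ and $-1$ for the other class.

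It remains to handle $\|\Delta\vec{y}^s\|$. Since $\|\w\|$ and $\|\ww\|$ are small and $b^{(2)}$ is small, the logits satisfy $\vec{z}^s\approx 0$. Hence $\hat{\vec{y}}^s\approx(\frac12,\frac12)$ and $\|\Delta\vec{y}^s\|\approx\frac{\sqrt2}{2}$, which is constant to leading order because the norms are held fixed. Substituting these into eq.~\ref{eq:gamma} gives, up to the $1/N$ normalisation convention, exactly $\frac{\sqrt2}{2}\big(N^e_{c}\langle\vec{x}^s\rangle_{C_c}-N^e_{c'}\langle\vec{x}^s\rangle_{C_{c'}}\big)=\vec{\gamma}^{*,simpl.}_{c(\alpha),k}$. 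This vector depends only on $k$ and $c(\alpha)$, not on $t$.

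\textbf{Step 2: reduce to a linear potential on the sphere.} With $\vec{\gamma}:=\vec{\gamma}^{*,simpl.}_{c(\alpha),k}$ constant, the appendix form of the gradients shows that inside the cell the loss contribution is $-\|\ww\|\,\w\cdot\vec{\gamma}$ to first order. So the angular part of the flow is the gradient flow of $-\|\ww\|\|\w\|\|\vec{\gamma}\|\cos\psi$ restricted to the sphere of radius $\|\w\|$, where $\psi$ is the angle between $\w$ and $\vec{\gamma}$.

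Equivalently, eq.~\ref{eq:grad_theta_alpha} becomes $\dot\theta_{\w,i}=\lambda C_\alpha\|\vec{\gamma}\|\sin(\theta_{\vec{\gamma},i}-\theta_{\w,i})$ with $C_\alpha=\|\ww\|\|\w\|$ constant. This system is decoupled from the norm and $\ww$ equations.

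\textbf{Step 3: fixed points and stability.} The fixed points are $\theta_{\w,i}=\theta_{\vec{\gamma},i}+n\pi$. The derivative of the right-hand side is $-\lambda C_\alpha\|\vec{\gamma}\|\cos(\theta_{\vec{\gamma},i}-\theta_{\w,i})$, which is negative at $n$ even (stable) and positive at $n$ odd (unstable). Geometrically, $\cos\psi$ is a Lyapunov function: it is strictly increasing along trajectories except at $\psi\in\{0,\pi\}$. Hence every trajectory not starting exactly at the antipode converges to $\psi=0$, i.e. $\theta_{\w,i}\to\theta_{\vec{\gamma},i}$.

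\textbf{Main obstacle.} The delicate step is making this rigorous in hyperspherical coordinates. The angles are not independent Cartesian coordinates: the metric factors $\prod_{j<i}\sin\theta_{\w,j}$ couple the components, so the componentwise form of eq.~\ref{eq:grad_theta_alpha} is really shorthand. I would sidestep this by working with the intrinsic Riemannian gradient of $\w\cdot\vec{\gamma}$ on the sphere and using the Lyapunov argument. This gives convergence of the direction, and hence of every angle, without relying on a per-coordinate stability analysis.

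A second caveat is that the statement holds only \emph{as long as} $\w\in\mathcal{C}_k$. If $\vec{\gamma}\notin\mathcal{C}_k$, the geodesic flow toward $\vec{\gamma}$ exits the cell in finite time. The limit statement is then only meaningful for stable cells, and for unstable cells it describes the direction of motion until the boundary crossing.
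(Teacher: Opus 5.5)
Your proposal is correct and follows the paper's proof. Step~1 is exactly the paper's argument: the gating is fixed at $\vec{g}_k$, $\cos(\theta_{\vec{\Delta y}^s}-\theta_{\ww})=\pm1$ according to class, and $\|\Delta\vec{y}^s\|\approx\frac{\sqrt{2}}{2}$ because $\hat{\vec{y}}^s\approx(\frac12,\frac12)$; your fixed points $\theta_{\vec{\gamma},i}+n\pi$, with stability read off from the sign of the derivative, are exactly its second part. The one real addition is your intrinsic Lyapunov argument with $\cos\psi$, which is a worthwhile strengthening: it gives convergence from any non-antipodal direction without relying on the componentwise, decoupled form of eq.~\ref{eq:grad_theta_alpha}, which the paper takes at face value but which does not literally hold in hyperspherical coordinates once $d_{input}>2$, where cross terms and $\prod_{j<i}\sin\theta_{\w,j}$ factors appear.
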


For convenience, we repeat:
\begin{equation}
    \vec{\gamma}_{\alpha} = \sum_{s} \vec{g}_{\alpha}^{s} \cos(\theta_{\Delta y^{s}} - \theta_{\ww}) \, ||  \Delta \vec{y}^s  || \,  \vec{x}^{s} \label{eq_app:gamma}
\end{equation}

\textbf{Proof that $ \langle\vec{\gamma}_{\alpha}^s \rangle (t)$ is constant} We consider the case $\theta_{\ww} = \frac{7\pi}{4} + 2\pi$, the other case is similar. From the derivations in \ref{app:fixed_target}, we have that $\theta_{\vec{\Delta y}^{s}} = \frac{7\pi}{4} + 2\pi $ for  (effective) samples of class 1, and  $\theta_{\vec{\Delta y}^{s}} = \frac{3\pi}{4} + 2\pi $ for (effective) samples of class 2. While $\w \in \mathcal{C}_k$, $\vec{g}_{\alpha} = \vec{g}_k$, i.e., the gating  vector does not change (by definition). We thus obtain:
\begin{equation}
    \vec{\gamma}_{\alpha} = \sum_{s'} \vec{g}_{k}^{s'}  ||  \Delta \vec{y}^s  || \,  \vec{x}^{s'} - \sum_{s''} \vec{g}_{k}^{s''}  ||  \Delta \vec{y}^s  || \,  \vec{x}^{s''}
\end{equation}
where we denoted with $s'$ effective samples of class 1 and with $s''$ effective samples of class 2. We further have that under simplified dynamics  $\vec{\hat{y}}^s \approx [0.5 \;0.5] $, as little signal reaches the output nodes for very small weight values. Therefore we can consider $||\Delta  y^s|| =\frac{\sqrt{2}}{2}$ for all samples during the simplified dynamics. Thus  $ \vec{\gamma}_{\alpha}(t)$ becomes a constant vector, in this case given by $\vec{\gamma}^{*,simpl.}_{1,k} = \frac{\sqrt{2}}{2} ( N^{e}_1 \langle  \, \vec{x}^s  \rangle_{C_1} - N^{e}_2 \langle \, \vec{x}^s  \rangle_{C_2})$.

\textbf{Proof that $\theta_{\w,i}(t) \rightarrow \theta_{\vec{\gamma}^{*,init}_{c(\alpha),k},i } $. } In the case of simplified dynamics and  $|| \vec{\gamma}^{*,simpl.}_{c(\alpha),k} ||$ constant,  eq. \ref{eq:grad_theta_alpha} reduces to:
$\langle \frac{\partial L^s}{ \partial (\theta_{\w,i})} \rangle \! =  - B_{\alpha}  \; \sin{(\theta_{\vec{\gamma}^{*,simpl.}_{\alpha),k},i} - \theta_{\w,i})}  
$ with $B_\alpha$ a constant. First we compute the optima through $\frac{\partial \theta_{\w,i}^*}{\partial t} = 0$:

\begin{align}
& \frac{\partial \theta_{\w,i}^*}{\partial t}  = - \lambda \langle \frac{\partial L^s}{ \partial \theta_{\w,i}^*} \rangle = 0 \\
 & \iff \sin{(\theta_{\vec{\gamma}^{*,simpl.}_{c(\alpha),k} } - \theta^*_{\w,i})} = 0 \\
  & \iff  \theta^*_{\w,i} = \theta_{\vec{\gamma}^{*,simpl.}_{c(\alpha),k}}  + n \pi.
\end{align}

The sign of $  \frac{\partial^2 \theta_{\w,i}^*}{\partial t^2}  $ is determined by the sign of 
\begin{align}
 \cos{(\theta_{\vec{\gamma}^{*,simpl.}_{c(\alpha),k},i } - \theta_{\w,i})} \frac{\partial(\theta_{\vec{\gamma}^{*,simpl.}_{c(\alpha),k},i }- \theta_{\w,i})}{\partial \theta_{\w,i} }
\end{align}

In the regime where $||\Delta  y^s|| = \frac{\sqrt{2}}{2}$ for all samples, $\theta_{\vec{\gamma}^{*,simpl.}_{c(\alpha),k}}$ does not depend on $\theta_{\w,i}$, and we obtain: 
\begin{align}
    \text{a negative sign when  }\theta_{\w,i} = \theta_{\vec{\gamma}^{*,simpl.}_{c(\alpha),k}} + n\pi  \text{ with n even }\\
    \text{a positive sign when  }\theta_{\w,i} = \theta_{\vec{\gamma}^{*,simpl.}_{c(\alpha),k}} + n\pi  \text{ with n odd }
\end{align}
this means $\theta_{\w,i}^* = \theta_{\vec{\gamma}^{*,simpl.}_{c(\alpha),k}} + 2 n \pi$ is the stable fixed point (at least during the initial regime). 

\subsection{Next Order Approximation}

\subsubsection{Approximations to the Softmax }

In general, we have for the output nodes:

\begin{align}
& \hat{y}_0 = \frac{e^{z_0^s}}{e^{z_0^s} + e^{z_1^s}} \\
&\hat{y}_1 = \frac{e^{z_1^s}}{e^{z_0^s} + e^{z_1^s}} 
\end{align}

When $\ww$ is aligned ($\theta_{\ww} = \frac{7\pi}{4} + 2\pi$ ("aligned to predict class 1") or  $\theta_{\ww} = \frac{3\pi}{4} + 2\pi$ ("aligned to predict class 2")), we have either $\ww =  k_\alpha [\begin{smallmatrix} 1 \\ -1 \\\end{smallmatrix}]$ or $\ww =  k_\alpha [\begin{smallmatrix} -1 \\ 1 \\\end{smallmatrix}]$ for a positive scalar $k_\alpha$. If all outgoing weights are aligned, we obtain:
\begin{align}
   & z_0^s = \sum_{\alpha // class 1} k_\alpha h_{\alpha} -  \sum_{\alpha' // class 2} k_{\alpha'} h_{\alpha'} \\
    & z_1^s = - \sum_{\alpha // class 1} k_\alpha h_{\alpha} +  \sum_{\alpha' // class 2} k_{\alpha'} h_{\alpha'} \\
\end{align}
where $\alpha // class 1$ means a sum over the neurons for which $\theta_{\ww} = \frac{7\pi}{4} + 2\pi$.
We thus have, when all outgoing weights are aligned, that $z_1^s = -z_0^s$. The output can thus be described by keeping track of the first output node, and we define $\hat{p}^s = \hat{y}_0^s$. Then we can rewrite $\hat{p}^s$ as:
\begin{align}
    \hat{p}^s = \frac{e^{z_0^s}}{e^{z_0^s}} \frac{e^{z_0^s}}{e^{z_0^s} + e^{z_1^s}} = \frac{1}{1+e^{-2z_0^s}} = \text{sigmoid}(2z_0^s)
\end{align}

We can expand the sigmoid function around the origin:
\begin{equation}
    \text{sigmoid}(x) = \frac{1}{2} + \frac{1}{4}x - \frac{1}{48}x^3 + \cdots
\end{equation}
Thus, a zeroth order approximation to $\hat{p}^s$ is given by
\begin{equation}
    \hat{p}^s \approx \frac{1}{2} 
    \label{eq_app:zeroth_order}
\end{equation}
and a first order approximation is given by:
\begin{equation}
    \hat{p}^s \approx \frac{1}{2} + \frac{1}{2}z_0^s 
    \label{eq_app:first_order}
\end{equation}
These approximations are valid in the early phase of learning when starting from small initial conditions, as in this case $z_0^s$ is very small.

\subsection{Next order approximations to the target directions of $\w$}\label{app:apporx_target_directions}

We here do a similar derivation as the one outlined in \cite{refinetti_neural_2023} for a single perceptron, but now for neuron within a larger network, where we assumed the outgoing weights are aligned. We consider the averaged derivative with respect to the weight vector $\w$ 

\begin{equation}
     \langle \frac{\partial L^s}{ \partial \vec{w}_{\alpha}^{(1)}} \rangle   = -  ||w^{(2)}_{\alpha}||  \vec{\gamma}_{\alpha}
\end{equation}

with $\gamma_{\alpha}$:

\begin{equation}
    \vec{\gamma}_{\alpha} = \sum_{s} g_{\alpha}^{s} \cos(\theta_{\Delta y^{s}} - \theta_{\ww}) \, ||  \Delta \vec{y}^s  || \,  \vec{x}^{s} \label{eq_app:gamma}
\end{equation}

The steady state for this equation is given by (see similar derivation in \cite{refinetti_neural_2023})
\begin{equation}
     \vec{\gamma}_{\alpha} = 0
\end{equation}

and we can use then use the first order approximation eq. \ref{eq_app:first_order}: 
\begin{equation}
    ||  \Delta \vec{y}^{s'}  || = \frac{\sqrt{2}}{2} (1 - z_0^{s'})
\end{equation}
and 
\begin{equation}
    ||  \Delta \vec{y}^{s''}  || = \frac{\sqrt{2}}{2} (1 + z_0^{s''})
\end{equation}
and obtain
\begin{equation}
    z_0^s = \sum_{\beta} g_{\beta}^{s}  \frac{1}{\sqrt{2}}|| \vec{w}_{\beta}^{(2)}|| \cos(\theta_{\Delta y^{s}} - \theta_{\beta})  \vec{w}_{\beta}^{(1)} \vec{x}^s.
\end{equation}

Within the same stable cell, neuron have the same target direction for their incoming weights: 

\begin{equation}
\vec{w}^{(1)*}_{\beta} = k_{\beta} \vec{w}^{(1)*}_{\alpha} 
\end{equation}

with $k_{\beta}$ a positive scalar.

Assume the neuron $\alpha$ is aligned to class 1, $\theta_{\ww} = \theta_{\ww}{*1} $. Define:
\begin{equation}
\vec{m}^{//C_1} = \frac{1}{2} \left[ \sum_{s'} \vec{x}^{s'} - \sum_{s''}  \vec{x}^{s''}  \right]
\end{equation}
and 
\begin{align}
Q^{//C_1} & = - \frac{1}{2} \sum_{\beta} k_{\beta} \left[  \sum_{s' \in C_1} \cos(\theta_{\Delta y^{s'}} - \theta_{\beta}) \vec{x}^{s'} (\vec{x}^{s'})^T  -  \sum_{s'' \in C_2}  \cos(\theta_{\Delta y^{s''}} - \theta_{\beta})  \vec{x}^{s''} (\vec{x}^{s''})^T \right] \\
 & = \sum_{\beta //C_1} k_{\beta} \left[  \sum_{s' \in C_1}  \vec{x}^{s'} (\vec{x}^{s'})^T  +  \sum_{s'' \in C_2}    \vec{x}^{s''} (\vec{x}^{s''})^T \right] -  \sum_{\beta//C_1} k_{\beta} \left[  \sum_{s' \in C_1}  \vec{x}^{s'} (\vec{x}^{s'})^T  +  \sum_{s'' \in C_2}    \vec{x}^{s''} (\vec{x}^{s''})^T \right] \\
 &= (\sum_{\beta //C_1} k_{\beta} - \sum_{\beta//C_1} k_{\beta}) \left[  \sum_{s' \in C_1}  \vec{x}^{s'} (\vec{x}^{s'})^T  +  \sum_{s'' \in C_2}    \vec{x}^{s''} (\vec{x}^{s''})^T \right] 
\end{align}

we then obtain:

\begin{align}
    &   \vec{\gamma}_{\alpha} =  \vec{m}_0^{//C_1}  + Q^{//C_1} \vec{w}^{(1)*}_{\alpha} = 0 
    & \iff \vec{w}^{(1)*}_{\alpha} = -   \vec{m}^{//C_1}(Q^{//C_1})^{-1} 
\end{align}

The zeroth order approximation yields $\vec{m}_0^{//C_1}$, and the second order approximation tells us this vector needs to be rotated according to the second order moments of the datasets as given by $Q^{//C_1}$. Higher order approximations will involve higher order moments of the effective datasets.

\section{Estimating the Number of Stable Cells}\label{app:stable_cells}

\begin{figure}[h!]
    \centering
    \includegraphics[width=0.5\linewidth]{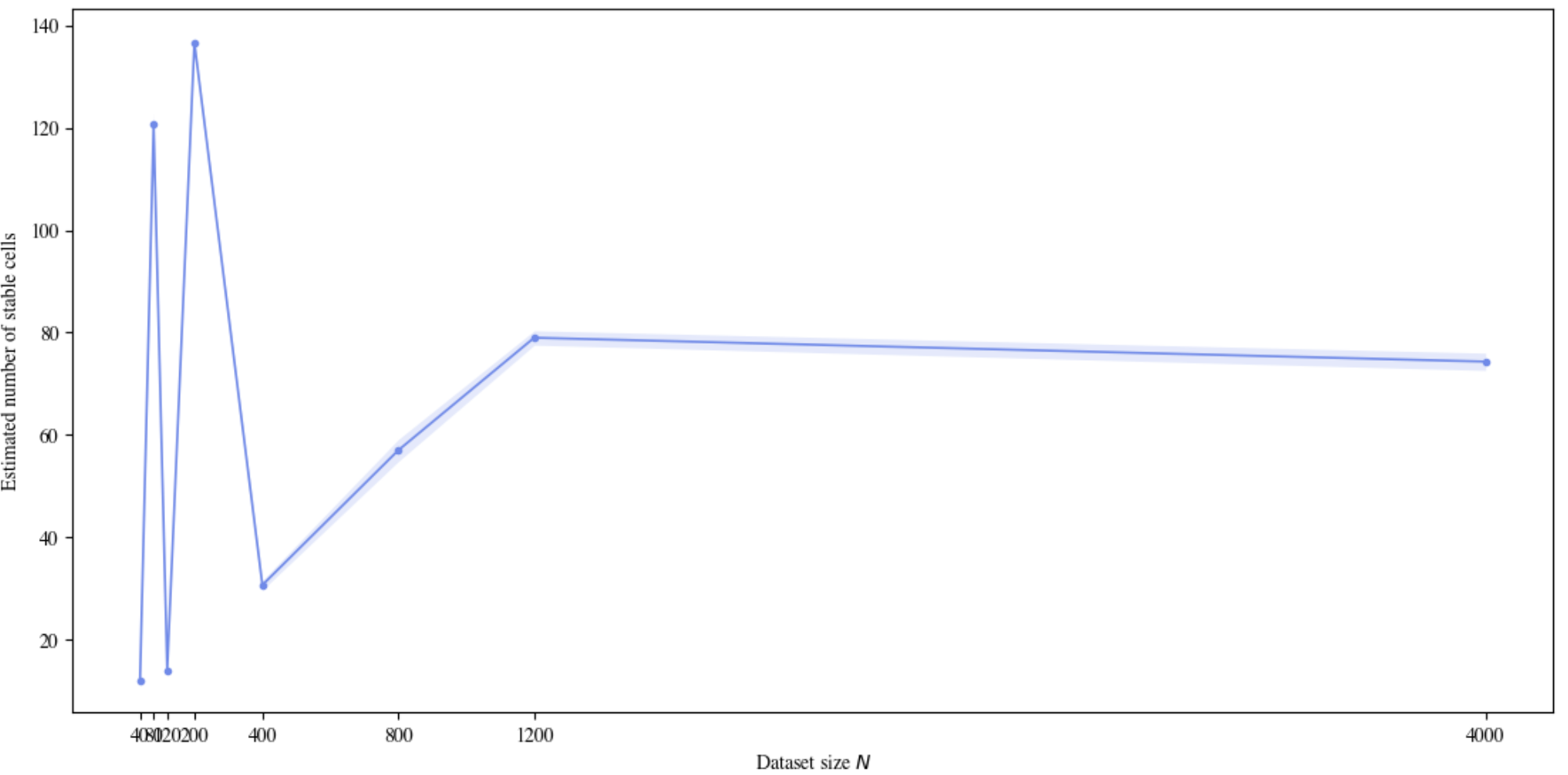}
    \caption{Detected number of stable cells for the binary classification CIFAR-10 datasets of different sizes. Averaged over 3 seeds for the initialization of the M= 100000 probes; $T_max = 200$ and $p=5$.}
    \label{fig:placeholder}
\end{figure}

The algorithm is based on the following observation, formalized below: a vector is the target vector of a stable cell iff, when computing the target vector of this vector, the same vector is returned. We initialize a large number of 'probe' vectors and compute their target vectors, check the criterion, and replace the vectors by their target vectors to continue until we reach convergence. The final number of unique target vectors corresponds to an estimate of the number of stable cells. Note that we run the algorithm twice, because there are two forms of target vectors, based on the two possible alignments of outgoing weights. 

 \paragraph{Setup.}
Let $X \in \mathbb{R}^{N \times d}$ be the dataset with binary labels
$y \in \{0,1\}^N$, and define the signed data matrix with rows
$\tilde{x}_i = (2y_i - 1)\,x_i$.
A weight vector $w \in \mathbb{R}^d$ induces a \emph{gating pattern}
\begin{equation}
    G(w) = \Bigl(\mathbf{1}\bigl[w^\top x_i > 0\bigr]\Bigr)_{i=1}^{N}
    \in \{0,1\}^N,
\end{equation}
recording which inputs lie in the open half-space defined by $w$.
The \emph{target map} $T : \mathbb{R}^d \to \mathcal{S}^{d-1}$ normalises
the sum of signed inputs currently activated by $w$:
\begin{equation}
    T(w) = \frac{m(w)}{\|m(w)\|}, \qquad
    m(w) = \sum_{i=1}^{N} G_i(w)\,\tilde{x}_i = G(w)^\top \tilde{X}.
\end{equation}
A weight vector is a \emph{stable cell} if its gating pattern is a fixed
point of $T$, i.e.\ $G(T(w)) = G(w)$.

\paragraph{Algorithm.}
We sample $M$ probes $w^{(1)}, \ldots, w^{(M)}$ uniformly on
$\mathcal{S}^{d-1}$ and iterate $T$ on all probes simultaneously.
At each step we record $n_t = |\{\text{distinct rows of } G(W)\}|$, the
number of distinct gating patterns in the current population.
Convergence is declared once $n_t$ stops decreasing for $p$ consecutive
iterations (patience criterion); the stable-cell count is then estimated
as $n_{\mathrm{stable}} \leftarrow n_{\mathrm{prev}}$, the last value
recorded before the plateau.
Probes may cycle between two gating patterns rather than settling at a
strict fixed point; using $n_{\mathrm{prev}}$ rather than $n_t$ accounts
for this.
The full procedure is given in Algorithm~\ref{alg:stable-cell}.

\begin{algorithm}[t]
\caption{Stable-Cell Detection}
\label{alg:stable-cell}
\begin{algorithmic}[1]
\Require Dataset $X\!\in\!\mathbb{R}^{N\times d}$, labels
         $y\!\in\!\{0,1\}^N$, probe count $M$,
         iteration cap $T_{\max}$, patience $p$
\Ensure  Stable-cell count $n_{\mathrm{stable}}$
\State Compute $\tilde{X}$ with rows $\tilde{x}_i = (2y_i-1)\,x_i$
\State Sample $W = [w^{(1)},\ldots,w^{(M)}]$ uniformly on $\mathcal{S}^{d-1}$
\State $\texttt{plateau} \leftarrow 0$;\; $n_{\mathrm{prev}} \leftarrow \infty$
\For{$t = 1, \ldots, T_{\max}$}
    \State $G \leftarrow (WX^\top > 0)$ \Comment{gating matrix, shape $M\!\times\!N$}
    \State $W \leftarrow \operatorname{normalise}(G\tilde{X})$ \Comment{apply $T$ to all probes}
    \State $n_t \leftarrow |\{\text{distinct rows of } G(W)\}|$
    \State $\texttt{plateau} \leftarrow \texttt{plateau}+1$ \textbf{if} $n_t \geq n_{\mathrm{prev}}$
           \textbf{else} $0$
    \State $n_{\mathrm{prev}} \leftarrow n_t$
    \If{$\texttt{plateau} \geq p$} \textbf{break} \EndIf
\EndFor
\State \Return $n_{\mathrm{stable}} \leftarrow n_{\mathrm{prev}}$
\end{algorithmic}
\end{algorithm}

\section{Unlocking} \label{app:unlocking}

We start from the equations of the gradient of the norms:
\begin{align}
     & \! \langle \frac{\partial L^s}{ \partial ||\vec{w}^{(2)}_{\alpha}||} \rangle =  - \langle h_\alpha^s \;  ||\vec{\Delta y}^{s}|| \;  \cos{(\theta_{\vec{\Delta y^{s}}}- \theta_{\ww}}) \rangle \\ 
      & \! \langle \frac{\partial L^s}{ \partial ||\vec{w}^{(1)}_{\alpha}||} \rangle  = - ||\vec{w}^{(2)}_{\alpha}|| \; ||\langle\vec{\gamma}_{\alpha}^s \rangle  || \;\cos{(\theta_{\langle\vec{\gamma}_{\alpha}^s \rangle,i } - \theta_{\alpha,i})}
    \end{align}

with $h^s_\alpha = \w \vec{x}^s + b_{\alpha}$, and $\vec{\gamma} _{\alpha}^s$ is given by eq. \ref{eq_app:gamma}. We will use $b_{\alpha} \approx 0$, as it starts from $b_{\alpha} = 0$ at initialization, also only grows significantly when the vector $\w$ is oriented close to its target vector (see eq. \ref{app_eq:grad_b}), but is still orders of magnitude smaller than $||\w||$ at that point (depending on the initialization scale of $\w$). Using 
\begin{align}
   \langle h^s_\alpha  ||\vec{\Delta y}^{s}|| \;  \cos{(\theta_{\vec{\Delta y^{s}}}- \theta_{\ww}}) \rangle = \langle (\w  \cdot \vec{x}^s) ||\vec{\Delta y}^{s}|| \;  \cos{(\theta_{\vec{\Delta y^{s}}}- \theta_{\ww}}) \rangle = ||w^{(1)}_{\alpha}|| \;  || \langle \vec{\gamma}_{\alpha}^s \rangle || cos(\delta_\alpha)
\end{align}
with  $\delta_{\alpha,i}$ be the remaining angular distance, $\delta_{\alpha,i} = \delta_{\alpha,i}(t) = \theta_{\langle\vec{\gamma}_{\alpha}^s \rangle,i } - \theta_{\alpha,i}$. We thus obtain:
\begin{align}
     \frac{\partial L}{ \partial ||  \vec{w}_{\alpha}^{(1)} || }  = -  ||w^{(2)}_{\alpha}|| \;  || \langle \vec{\gamma}_{\alpha}^s \rangle ||  cos(\delta_\alpha)\\
     \frac{\partial L}{ \partial ||  \vec{w}_{\alpha}^{(2)} || }  = -  ||w^{(1)}_{\alpha}|| \;  || \langle \vec{\gamma}_{\alpha}^s \rangle || cos(\delta_\alpha)
\end{align}

We first note, as also discussed in \cite{tarmoun_understanding_2021} in a different context, that we can make use of a conserved quantity:

\begin{equation}
     \frac{\partial ( ||  \vec{w}_{\alpha}^{(2)} ||^2 - ||  \vec{w}_{\alpha}^{(1)} ||^2)}{ \partial t }  = 2  ||\vec{w}_{\alpha}^{(2)}||   \frac{\partial ||\vec{w}_{\alpha}^{(2)}||}{ \partial t} - 2  ||\vec{w}_{\alpha}^{(1)}||   \frac{\partial  || \vec{w}_{\alpha}^{(1)}||}{ \partial t} = 0 
\end{equation}

which yields:

\begin{equation}
     ||  \vec{w}_{\alpha}^{(2)} ||^2 - ||  \vec{w}_{\alpha}^{(1)} ||^2 = c_\alpha
\end{equation}
with $c_\alpha$ a constant. 

Consider the product $a_{\alpha} =  ||  \vec{w}_{\alpha}^{(2)} || \; ||  \vec{w}_{\alpha}^{(1)} ||$. We can then obtain:
\begin{align}
    \big( ||  \vec{w}_{\alpha}^{(2)} ||^2 - ||  \vec{w}_{\alpha}^{(1)} ||^2 \big)^2 & = c_{\alpha}^2 \\
   &  =  ||  \vec{w}_{\alpha}^{(2)} ||^4 + 2 ||  \vec{w}_{\alpha}^{(2)} ||^2 ||  \vec{w}_{\alpha}^{(1)} ||^2 +  ||  \vec{w}_{\alpha}^{(1)} ||^4 \\
   & = ||  \vec{w}_{\alpha}^{(2)} ||^4 - 2 a_{\alpha}^2 +  ||  \vec{w}_{\alpha}^{(1)} ||^4 \\
   \iff c_{\alpha}^2 + 2 a_{\alpha}^2 &=   ||  \vec{w}_{\alpha}^{(2)} ||^4  + || \vec{w}_{\alpha}^{(1)} ||^4 
\end{align}

yielding:
\begin{align}
    \frac{\partial a_\alpha}{ \partial t } & = - \langle \lambda \big(||\vec{w}_{\alpha}^{(1)}||   \frac{\partial ||\vec{w}_{\alpha}^{(2)}||}{ \partial t}  +  ||\vec{w}_{\alpha}^{(2)}||   \frac{\partial  || \vec{w}_{\alpha}^{(1)}||}{ \partial t} \big) \rangle\\
    &= \lambda  || \langle \vec{\gamma}_{\alpha}^s \rangle || \big( ||\vec{w}_{\alpha}^{(1)}||^2 +  ||\vec{w}_{\alpha}^{(2)}||^2  \big)  \\
    & = \lambda  || \langle \vec{\gamma}_{\alpha}^s \rangle || cos(\delta_\alpha) \sqrt{ c_{\alpha}^2 + 4 a_{\alpha}^2 }
\end{align}

As discussed in \cite{tarmoun_understanding_2021}, under the assumption of small initial conditions, we can assume $ c_\alpha \approx 0$. If we furthermore assume $\ww$ is already aligned and we are in the phase of learning where $ ||  \Delta \vec{y}^s  || \approx \frac{\sqrt{2}}{2}$ for all s in the effective dataset, we have that $|| \langle \vec{\gamma}_{\alpha}^s \rangle||$ is a constant (during this phase). Furthermore, in the case of small angular distance, we can assume (due to the direction-norm decoupling) that $\delta_{\alpha,i}(t)$ changes very slowly compared to $a(t)$. This yields: 
\begin{equation}
    a_{\alpha}(t) \sim  e^{ \lambda || \langle \vec{\gamma}_{\alpha}^s \rangle||  cos(\delta_\alpha) t}
\end{equation}
 This represents an exponential growth of $a_{\alpha} =  ||  \vec{w}_{\alpha}^{(2)} || \; ||  \vec{w}_{\alpha}^{(1)} || $ with an exponent determined by $cos(\delta_\alpha) <= 1$. Thus neurons that are closer to their target direction grow exponentially faster in norm.

 \section{Aligned neurons reduce the loss} \label{app_loss_reduction}

 To show that aligned neurons that grow in norm cause the loss to drop, we show that $\frac{\partial \langle ||\Delta  y^{s}||\rangle}{ \partial a_{\alpha}}$ is negative for an aligned neuron $\alpha$. 
 
 Here $a_{\alpha} =  ||  \vec{w}_{\alpha}^{(2)} || \; ||  \vec{w}_{\alpha}^{(1)} ||$. 

We have for $||\Delta  y^{s}|| $ when $s$ is a sample of class 1:
\begin{equation}
   ||\Delta  y^{s}|| = \sqrt{(1-\hat{p^s})^2 + (0-(1-\hat{p^s}))^2} = \sqrt{2} (1 - \hat{p^s})
\end{equation}
and for samples of class 2:
\begin{equation}
   ||\Delta  y^{s}|| = \sqrt{(0-\hat{p^s})^2 + (1-(1-\hat{p^s}))^2} = \sqrt{2}  \hat{p^s}
\end{equation}

Thus, we have, for the average over the effective dataset of neuron $\alpha$:
\begin{align}
     \langle ||\Delta  y^{s}||\rangle & = \frac{N^e_1}{N^e}\langle ||\Delta  y^{s}||\rangle_{C_1} + \frac{N^e_2}{N^e} \langle ||\Delta  y^{s}||\rangle_{C_2}\\
    & = \frac{N^e_1}{N^e} \sqrt{2} \big(1 - \langle \hat{p^s }\rangle_{C_1}\big) + \frac{N^e_2}{N^e} \sqrt{2}  \langle \hat{p^s} \rangle_{C_2}
\end{align}

In the regime we are considering, we have $\hat{p}^s \approx \frac{1}{2} + \frac{1}{2}z_0^s 
$, and 
\begin{align}
   z_0^s  =    \sum_{\beta' // class 1} g_{\beta'}^s w^{(2)}_{\beta',0} \vec{w}^{(1)}_{\beta'} \vec{x}^s +  \sum_{\beta'' // class 2} g_{\beta''}^s w^{(2)}_{\beta'',0} \vec{w}^{(1)}_{\beta''} \vec{x}^s 
\end{align}
Here the index $\beta'$ runs over all neurons aligned to predict class 1 ($\theta_{\beta'} = \theta^{1*}$) and the index $\beta{''}$ runs over all neurons aligned to predict class 2 ($\theta_{\beta'} = \theta^{2*}$). We include the gating vectors of all neurons to indicate if their effective datasets also contains the sample s. I.e., $g_{\beta'}^s = 1$ if the given sample (here an effective sample for neuron $\alpha$) is also an effective sample for neuron ${\beta'}$, and $g_{\beta'}^s = 0$ otherwise. 

Assuming $\alpha // class 1$ (the other case is similar), the contribution of neuron $\alpha$ to $z_0^s$ is given by:
\begin{equation}
    \langle z_0^{s,\alpha} \rangle =  w^{(2)}_{\alpha,0} \vec{w}^{(1)}_{\alpha'} \langle\vec{x}^s\rangle
\end{equation}
and for its contribution to $ \langle ||\Delta  y^{s}||\rangle$, we get:

\begin{align}
     \langle ||\Delta  y^{s}||\rangle ^{\alpha} 
    & = \frac{N^e_1}{N^e} \sqrt{2} \big(\frac{1}{2} - \frac{1}{2}\langle z_0^{s,\alpha}  \rangle_{C_1}\big) + \frac{N^e_2}{N^e} \sqrt{2} (  \frac{1}{2} + \frac{1}{2} \langle z_0^{s,\alpha}  \rangle_{C_2})\\
    & = c + \frac{\sqrt{2}}{2}w^{(2)}_{\alpha,0} \vec{w}^{(1)}_{\alpha'} \big[ - \frac{N^e_1}{N^e}  \langle\vec{x}^s\rangle_{C_1} + \frac{N^e_2}{N^e}  \langle\vec{x}^s\rangle_{C_2} \big]
\end{align}
with c a constant. 
As $\alpha // class 1$, meaning $\theta_{\ww} = \theta^{1*} = \frac{7 \pi}{4}$, we have that $w^{(2)}_{\alpha,0}  = \frac{ || \vec{w}^{(2)}_{\alpha} ||}{\sqrt{2}}$. Moreover, when aligned and in the early phase of training, we know the direction of  $\vec{w}^{(1)}_{\alpha}$ is close to the direction of $\vec{m^1} = + \frac{N^e_1}{N^e}  \langle\vec{x}^s\rangle_{C_1} - \frac{N^e_2}{N^e}  \langle\vec{x}^s\rangle_{C_2} $, which is the opposite direction to the vector in the equation above. We thus obtain:
\begin{align}
    \frac{\partial \langle ||\Delta  y^{s}||\rangle}{ \partial a_{\alpha}}& = \frac{\partial}{\partial a_{\alpha}} \big[ \frac{1}{2}  || \vec{w}^{(2)}_{\alpha} ||\; || \vec{w}^{(1)}_{\alpha}|| \; || \vec{m}^{1} || \cos{(\pi + \Delta)} \big]\\
    & = \frac{1}{2} || \vec{m}^{1} || \cos{(\pi + \Delta)}
\end{align}
with $\Delta$ the angular distance between $\vec{m^1}$ and $\w$, a small angle. The sign of the derivative is thus negative.

\section{Additional Experimental Results} \label{app:exp_res}

\begin{table}[h!] 
\centering
\caption{Effect of cluster merging on model accuracy for varying thresholds $\tau$.
         $k$: number of clusters after merging; $\Delta\mathrm{acc}$: relative change
         in accuracy (\%). Values show mean $\pm$ std over seeds.}
\label{tab:cluster_merge}
\resizebox{\linewidth}{!}{%
\begin{tabular}{llrrrrrr}
\toprule
 & & \multicolumn{2}{c}{$\tau = 0.70$}
   & \multicolumn{2}{c}{$\tau = 0.95$}
   & \multicolumn{2}{c}{$\tau = 0.99$} \\
\cmidrule(lr){3-4}\cmidrule(lr){5-6}\cmidrule(lr){7-8}
$\sigma_0$ & $N$
  & $k$ & $\Delta\mathrm{acc}$ (\%)
  & $k$ & $\Delta\mathrm{acc}$ (\%)
  & $k$ & $\Delta\mathrm{acc}$ (\%) \\
\midrule
\multirow{3}{*}{$10^{-5}$}
  & 200  & $3.3  \pm 0.5$  & $-5.4  \pm 3.8$ & $15.3  \pm  6.3$ & $-3.3 \pm 4.6$ & $ 93.3 \pm 18.6$ & $+0.7 \pm 0.7$ \\
  & 800  & $4.7  \pm 0.9$  & $-29.2 \pm 1.9$ & $156.0 \pm 11.3$ & $+0.5 \pm 0.7$ & $277.0 \pm  5.1$ & $ 0.0 \pm 0.0$ \\
  & 4000 & $105^*$         & $-15.0^*$        & $300^*$           & $0.0^*$         & $300^*$           & $0.0^*$ \\
\midrule
\multirow{3}{*}{$10^{-3}$}
  & 200  & $21.0 \pm 3.7$  & $-14.2 \pm 0.6$ & $172.0 \pm 25.7$ & $-0.2 \pm 0.2$ & $286.3 \pm  7.4$ & $ 0.0 \pm 0.0$ \\
  & 800  & $8.7  \pm 2.5$  & $-29.5 \pm 3.1$ & $256.7 \pm  5.2$ & $ 0.0 \pm 0.0$ & $298.3 \pm  1.7$ & $ 0.0 \pm 0.0$ \\
  & 4000 & $188^*$         & $-4.7^*$         & $300^*$           & $0.0^*$         & $300^*$           & $0.0^*$ \\
\bottomrule
\multicolumn{8}{l}{\footnotesize $^*$ Single seed; no std available.}
\end{tabular}
}
\end{table}

\begin{figure}
    \centering
    \includegraphics[width=\linewidth]{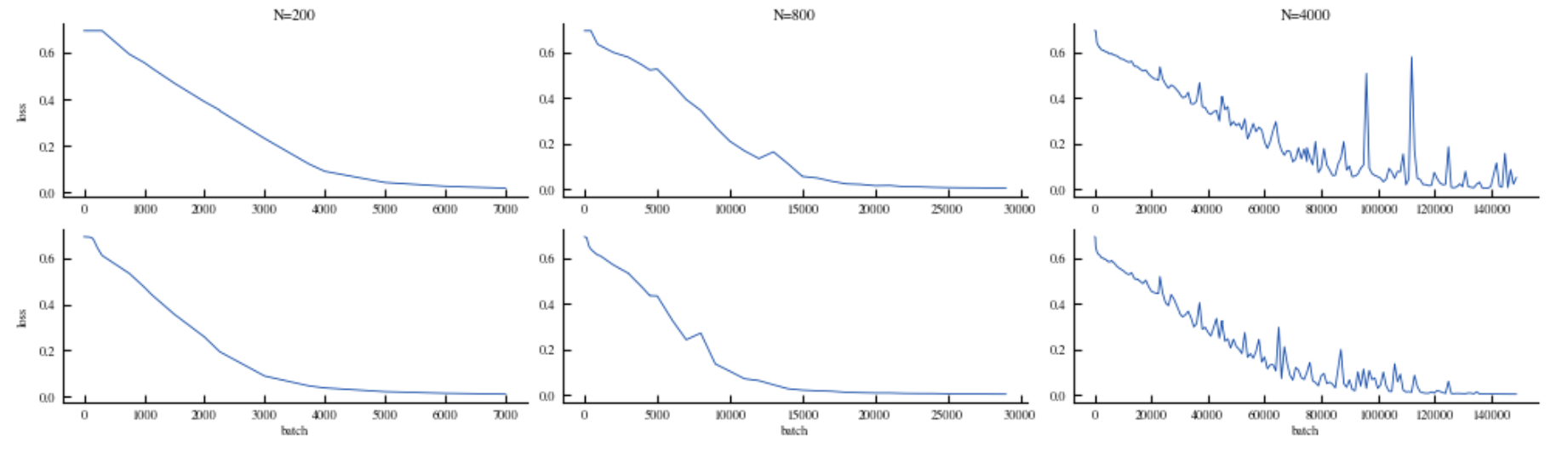}
    \caption{Train loss curves for default set of experiments}
    \label{app:loss_curves}
\end{figure}

\newpage

\end{document}